\newtheorem{theorem}{Theorem}[section]
\newtheorem{corollary}[theorem]{Corollary}
\newtheorem{definition}[theorem]{Definition}
\icmltitlerunning{Superposition-Assisted Stochastic Optimization for Hawkes Processes}
\begin{document}

\twocolumn[
\icmltitle{Superposition-Assisted Stochastic Optimization for Hawkes Processes}



\icmlsetsymbol{equal}{*}

\begin{icmlauthorlist}
\icmlauthor{Hongteng Xu}{inf,du}
\icmlauthor{Xu Chen}{ts}
\icmlauthor{Lawrence Carin}{du}

\end{icmlauthorlist}

\icmlaffiliation{du}{Department of Electrical and Computer Engineering, Duke University, Durham, NC, USA}
\icmlaffiliation{inf}{InfiniaML, Inc., Durham, NC, USA}
\icmlaffiliation{ts}{School of Software, Tsinghua University, Beijing, China}

\icmlcorrespondingauthor{Hongteng Xu}{hongteng.xu@infiniaml.com}

\icmlkeywords{Superposition, Stochastic optimization, Hawkes process}

\vskip 0.3in
]



\printAffiliationsAndNotice{}  

\begin{abstract}
We consider the learning of multi-agent Hawkes processes, a model containing multiple Hawkes processes with shared endogenous impact functions and different exogenous intensities. 
In the framework of stochastic maximum likelihood estimation, we explore the associated risk bound. 
Further, we consider the superposition of Hawkes processes within the model, and demonstrate that under certain conditions such an operation is beneficial for tightening the risk bound. 
Accordingly, we propose a stochastic optimization algorithm assisted with a diversity-driven superposition strategy, achieving better learning results with improved convergence properties. 
The effectiveness of the proposed method is verified on synthetic data, and its potential to solve the cold-start problem of sequential recommendation systems is demonstrated on real-world data. 
\end{abstract}

\section{Introduction}
\label{sec:intro}
Real-world sequential data often describe interactions between a set of independent agents and the entities in a system. 
For each agent, its event sequence records when it interacts with which system entities. 
A typical example is online shopping, in which the agents correspond to the users, the entities correspond to the items, and the event sequences collect when and which items the users purchase. 
From the viewpoint of point processes, these sequences can be modeled by multiple Hawkes processes~\citep{hawkes1971point}, which simultaneously capture the endogenous triggering patterns among the entities and the exogenous factors related to agents. 
When triggering patterns are shared across different agents, the collection of these Hawkes processes formulates a new model called \emph{multi-agent Hawkes process}, investigated in this paper. 

Learning multi-agent Hawkes processes suffers from the challenges of scalability and robustness. 
The target systems may involve a large number of entities and make interactions with numerous agents, $i.e.$, the online shopping system mentioned above.
However, most existing learning methods~\citep{lewis2011nonparametric,bacry2014second,achab2016uncovering,xu2016learning,yang2017online} are time-consuming, with computational complexity that is at least the square of the number of observed events. 
The model may be unreliable when the number of events is limited, because the dimension of model parameters (the complexity of the model) increases linearly with respect to the number of agents and quadratically respect to the number of entities. 

To address these problems, we propose a new learning method for multi-agent Hawkes process, endowed with good scalability and robustness. 
In particular, a stochastic maximum likelihood estimation is applied, with improvements in its stability and speed. Further, we derive the risk bound corresponding to this method. 
We also consider the superposition of Hawkes processes within our model, and prove that this superposition can help tighten the risk bound. 
Accordingly, a superposition-assisted stochastic optimization algorithm is designed to learn the proposed model robustly. 

To the authors' knowledge, this work is the first stochastic optimization method for Hawkes process-related models that has an explicit analysis on the risk bound and the superposition operation. 
Moreover, distinct from traditional work~\citep{cox1954superposition,ccinlar1968superposition,moller2012transforming}, our method treats the superposition of Hawkes processes as a ``benefit'' rather than a ``challenge'' in the learning task. 
Experiments show that the proposed method achieves lower estimation errors with improved convergence, and obtains encouraging results for solving the cold-start problem of recommendation systems.

\section{Learning Multi-Agent Hawkes Processes}
\subsection{Proposed model}
Consider a system with $C$ entities and the activities of $M$ independent agents. 
For agent $m\in\{1,...,M\}$, we observe the associated behavior as an $I_m$-element event sequence, $\bm{s}^m=\{(t_i^m, c_i^m)\}_{i=1}^{I_m}$. 
Here, $t_i^m\in [0, T]$ is the timestamp of the event, and $c_i^m\in\{1,...,C\}$. 
For agent $m$, its counting process is $N_c^m(t)=|\{(t_i^m, c_i^m)|t_{i}^m\leq t,c_i^m=c~i=1,2,...\}|$ for $c\in\{1,\dots,C\}$, where $|\cdot|$ calculates the cardinality of a set. 
Each $N_c^m(t)$ is the number of agent-$m$ events involving entity $c$ up to time $t$. 
Furthermore, denote the historical events for agent $m$ up to time $t$ as $\mathcal{H}_t^m$. 
The expected instantaneous happening rate of an event with entity $c$ is represented by an intensity function:
\begin{eqnarray}\label{intensity}
\begin{aligned}
\lambda_c^m(t)={\mathbb{E}[dN_c^m(t)|\mathcal{H}_t^m]}/{dt},
\end{aligned}
\end{eqnarray}

The multi-agent Hawkes process describes the target system as an endogenously-stationary system with exogenous fluctuations among agents. 
In particular, we assume that the happening rate of events is controlled by exogenous factors dependent on agent attributes and endogenous factors depending on the triggering patterns among different entities. 
Accordingly, we formulate the intensity function as
\begin{eqnarray}\label{mshp}
\begin{aligned}
\lambda_c^m(t)=\mu_c^m + \sideset{}{_{(t_i^m, c_i^m)\in\mathcal{H}_t^m}}\sum\phi_{cc_i}(t,t_i).
\end{aligned}
\end{eqnarray}
Here, $\mu_{c}^m$ is the exogenous intensity of entity $c$ related to agent $m$. 
The term $\sum_{(t_i^m, c_i^m)\in\mathcal{H}_t^m}\phi_{cc_i}(t,t_i)$ represents the accumulation of endogenous intensity caused by history~\citep{farajtabar2014shaping}. 
The impact function $\phi_{cc'}(t,s)$, $t\geq s$, represents the influence of entity $c'$ on entity $c$ when their corresponding events happen at time $s$ and $t$, respectively, which is stationary and invariant to different agents. 

{As in most existing work~\citep{zhou2013learning,zhao2015seismic,bacry2015generalization,xu2016learning}}, we assume the impact function to be shift-invariant, $i.e.$, $\phi_{cc'}(t,s)=\phi_{cc'}(t-s)$, and parametrize it by a basis representation, $i.e.$, $\phi_{cc'}(t)=\sum_{l=1}^{L}a_{cc'l}g_{l}(t)$, where $\{g_{l}(t)\}_{l=1}^{L}$ are predefined nonnegative kernels, like Gaussian or exponential functions. 
To obtain a physically-meaningful intensity function, we require all $\mu_c^m$ and $a_{cc'l}$ to be nonnegative. 

We denote the multi-agent Hawkes process with $M$ independent agents as $HP(\bm{U},\bm{A})$, with parameters defined by exogenous intensity matrix $\bm{U}=[\mu_{c}^m]\in\mathbb{R}^{C\times M}$ and an endogenous impact tensor $\bm{A}=[a_{cc'l}]\in\mathbb{R}^{C\times C\times L}$. 
For agent $m$, its instantiated counting process is $N^m(t)=[N_c^m(t)]\sim HP(\bm{\mu}^m,\bm{A})$, where $\bm{\mu}^m=[\mu_1^m;...;\mu_C^m]$. 

Note that the intensity function in (\ref{mshp}) corresponds to a linear function of $\{\bm{U},\bm{A}\}$:
\begin{eqnarray}\label{linear}
\begin{aligned}
\lambda_c^m(t) = \bm{x}_{c,m}^{\top}(t)\bm{\theta},~\mbox{and}~\bm{\theta}=[\mbox{vec}(\bm{U});\mbox{vec}(\bm{A})],
\end{aligned}
\end{eqnarray}
where $\bm{\theta}\in\mathbb{R}^{C(M+CL)}$ is a vectorized representation of the parameters, with $\mbox{vec}(\cdot)$ vectorizing the input.
$\bm{x}_{c,m}(t)=[\mbox{vec}(\bm{e}_{c,m});\mbox{vec}(\bm{E}_{c,m}(t))]$. 
$\bm{e}_{c,m}\in\mathbb{R}^{C\times M}$, whose elements are zeros except that in the $c$-th row and the $m$-th column, which has value $1$, corresponds to $\mu_c^m$. 
$\bm{E}_{c,m}(t)=[e_{cc'l}^m(t)]\in\mathbb{R}^{C\times C\times L}$, where $e_{cc'l}^m(t)=\sum_{(t_i^m, c_i^m)\in\mathcal{H}_t^m,~c_i^m=c'}g_l(t-t_i^m)$. 
Additionally, we further parametrize $\bm{A}$ when additional features of entities $\{\bm{f}_c\}_{c\in\mathcal{C}}$ are available, $e.g.$, $a_{cc'l}=\bm{f}_c^{\top}\bm{W}_{l}\bm{f}_{c'}$ and $a_{cc'l}=\bm{w}_{l,c}^{\top}\bm{f}_{c'}$~\citep{du2015time,wang2016coevolutionary}.
In those cases, we can still write $\lambda_c^m(t)$ linearly like (\ref{linear}) with $\bm{\theta}=\mbox{vec}(\bm{U},\{\bm{W}_l\})$ or $\mbox{vec}(\bm{U},\{\bm{w}_{l,c}\})$, and our theoretical analysis below is also applicable. 
Therefore, without loss of generality, in this work we only consider the model without additional features. 

This multi-agent Hawkes process model may be applied to many real-world phenomena.  
The shopping behavior of users are mainly dependent on their intrinsic preferences on different items, as well as the triggering patterns among historical items~\citep{wang2016coevolutionary,xu2017benefits}. 
{Besides online shopping, the information diffusion process in social networks obeys the same mechanism~\citep{farajtabar2014shaping,gomez2015estimating}.}
The event sequences corresponding to different information ($i.e.$, agents) diffusing among different users ($i.e.$, entities) share the same endogenous triggering patterns among users because the relationships between users are stationary.
However, they may have different exogenous intensities, depending on diverse user preferences relative to different information.

\subsection{Stochastic maximum likelihood estimation}
Without loss of generality, in the following we assume that each agent generates one sequence. 
Given $M$ event sequences $\{\bm{s}^m\}_{m=1}^{M}$, we learn an $M$-agent Hawkes process by maximum likelihood estimation:
\begin{eqnarray}\label{opt}
\begin{aligned}
\sideset{}{_{\bm{\theta}\geq \bm{0}}}\min \mathcal{L}(\bm{\theta}) + \alpha\mathcal{R}(\bm{\theta}),
\end{aligned}
\end{eqnarray}
where $\mathcal{L}(\bm{\theta})$ represents the negative log-likelihood. 
According to the definition in~\citep{daley2007introduction} and the linear representation of intensity function in (\ref{linear}), we write $\mathcal{L}(\bm{\theta})$ as a sum of random functions:
\begin{eqnarray}\label{nll}
\begin{aligned}
&\mathcal{L}(\bm{\theta})=\sum_{m=1}^{M}\sum_{i=1}^{I_m}\Biggl\{\sum_{c=1}^{C}\int_{t_{i-1}^m}^{t_i^m}\lambda_c^m(s)ds-\log\lambda_{c_i^m}(t_i^m)\Biggr\}\\
&=\sum_{m,i}\Bigl\{\bm{X}_m^{\top}(t_i^m)\bm{\theta}-\log\bm{x}_{c_i^m,m}^{\top}(t_i^m)\bm{\theta}\Bigr\}=\sum_{m,i}f_i^m(\bm{\theta}),
\end{aligned}
\end{eqnarray}
where $\bm{x}_{c_i^m,m}(t_i^m)$ is defined as that in (\ref{linear}), and $\bm{X}_m(t_i^m)=\sum_{c=1}^{C}\int_{t_{i-1}^m}^{t_i^m}\bm{x}_{c,m}(s)ds$. 
For each event, the negative logarithm of its conditional probability given history is denoted $f_i^m(\bm{\theta})=-\log p((t_i^m, c_i^m)|\mathcal{H}^m_{t_i^m})$, which is nonnegative and strongly-convex. 
$\mathcal{R}(\bm{\theta})$ is an optional convex regularizer on the parameters, $e.g.$, the sparse, group-sparse and low-rank regularizers imposed on $\bm{A}$~\citep{xu2016learning}. 
These regularizers can often be reformulated as square loss terms by introducing auxiliary and dual variables~\citep{zhou2013learning,luo2015multi}, which do not change the convexity of the problem.
Therefore, for convenience we only consider the likelihood term without regularizers in the following theoretical discussion.

The optimization problem in (\ref{opt}) can be solved by several iterative algorithms, $e.g.$, the alternating direction method of multipliers (ADMM)~\citep{zhou2013learning}, the projected gradient descent~\citep{lewis2011nonparametric}, etc. 
However, most existing methods merely consider the batch optimization of (\ref{opt}), which use all observed events to calculate gradients in each iteration. 
Because the intensity at each timestamp considers all historical events, the computational complexity per iteration is as high as $\mathcal{O}(MI^2)$, where $I=\arg\max_m\{I_m\}_{m=1}^M$, which leads to a scalability challenge. 

To accelerate the learning process, we design an event-level stochastic optimization algorithm. 
In each iteration, we select a small batch of events randomly from the sequences to calculate the gradients, and then update the parameters by {projected gradient descent~\citep{lewis2011nonparametric,du2015time}. }
The setup for our event-level stochastic optimization algorithm is summarized in Algorithm~\ref{alg1}, where the operator $(\cdot)_+$ sets all negative inputs to zeros.

\begin{algorithm}[t]
   \caption{Stochastic Optimization of Multi-agent Hawkes Processes (StocOpt)}
   \label{alg1}
\begin{algorithmic}[1]
   \STATE \textbf{Input:} Event sequences $\{\bm{s}^m\}_{m=1}^{M}$, batch size $B$, length of history $J$, offset $\lambda_0$, and learning rate $\eta$.
   \STATE \textbf{Output:} Parameters of model, $\bm{\theta}=[\mbox{vec}(\bm{U}), \mbox{vec}(\bm{A})]$.
   \STATE Initialize $\bm{U}=[\mu_c^m]$ and $\bm{A}=[a_{cc'l}]$ randomly.
   \REPEAT
   \STATE Select a batch of events $\mathcal{I}_{sub}^m$ randomly from $\{\bm{s}^m\}$.
   \STATE Calculate $\bm{x}_{c_i^m,m}(t_i^m)$, $\bm{X}_m(t_i^m)$ with at most $J$ historical events for $i\in \mathcal{I}_{sub}^m$, and obtain $f_i^m(\bm{\theta})$.
   \STATE $\bm{\theta}=(\bm{\theta}-\eta\sum_{i\in \mathcal{I}_{sub}^m}\nabla_{\lambda_0} f_i^m)_+$.
   \UNTIL{convergence}
\end{algorithmic}
\end{algorithm}

There are two aspects of Algorithm~\ref{alg1} that reduce its computational complexity and improve stability. 
First, in practice the impact functions often decay over time. 
This implies that one can ignore historical events happening far before the current ones when calculating $\bm{x}_{c_i^m,m}(t_i^m)$ and $\bm{X}_m(t_i^m)$ (line 6 of Algorithm~\ref{alg1}). 
Therefore, we set a maximum $J$ for the number of historical events used in $\bm{x}_{c_i^m,m}(t_i^m)$ and $\bm{X}_m(t_i^m)$. 
As a result, the computational complexity per iteration reduces to $\mathcal{O}(BJ)$, where both the batch size $B$ and the length of history $J$ are much smaller than $I$. 
Secondly, the original gradient with respect to each event is $\nabla f_i^m(\bm{\theta})=\bm{X}_m(t_i^m)-\frac{\bm{x}_{c_i^m,m}(t_i^m)}{\bm{x}_{c_i^m,m}^{\top}(t_i^m)\bm{\theta}}$, which may suffer from numerical problems when $\bm{x}_{c_i^m,m}^{\top}(t_i^m)\bm{\theta}\approx 0$.
To improve the stability of our algorithm, we introduce a small positive offset $\lambda_0$ and apply the gradient with $\lambda_0$, $i.e.$,
\begin{eqnarray}\label{grad}
\begin{aligned}
\nabla_{\lambda_0} f_i^m(\bm{\theta})=\bm{X}_m(t_i^m)-\frac{\bm{x}_{c_i^m,m}(t_i^m)}{\max\{\bm{x}_{c_i^m,m}^{\top}(t_i^m)\bm{\theta},\lambda_0\}}.
\end{aligned}
\end{eqnarray}
This means that our algorithm tends to learn each intensity function $\lambda_c^m(t)$ with a lower bound $\lambda_0$. 

\subsection{Risk bounds}
The functions $\{f_i^m\}$ in~(\ref{nll}) can be viewed as samples associated with an unknown distribution $\mathbb{P}$. 
If we normalize the negative log-likelihood in~(\ref{nll}) with the number of events, $i.e.,$, $\frac{1}{I_{\Sigma}}\mathcal{L}(\bm{\theta})$ with $I_{\Sigma}=\sum_m I_m$, then we use an empirical objective function $\frac{1}{I_{\Sigma}}\sum_{m,i}f_{i}^m$ to approximate an expected function $\mathbb{E}_{f\sim\mathbb{P}}[f(\bm{\theta})]$. 
From this viewpoint, our algorithm achieves stochastic convex optimization for a typical empirical risk minimization (ERM) problem. 
Denote the ground truth and the estimation of the proposed model as $\bm{\theta}_*$ and $\hat{\bm{\theta}}$, respectively. 
We can estimate the excess risk bound of our learning method, denoted as $\mathbb{E}[f(\hat{\bm{\theta}})-f(\bm{\theta}_*)]$, based on the properties of the random function $f$ and some simple and reasonable assumptions. 
\begin{theorem}\label{thm1}
Suppose that the multi-agent Hawkes process $HP(\bm{U},\bm{A})$ with $C$ entities and $M$ agents has intensity functions defined as (\ref{mshp}) and satisfies: 

\vspace{-3pt}
1. Its parameters $\bm{U}$ and $\bm{A}$ are nonnegative and bounded, $i.e.$, $\bm{U},\bm{A}\geq \bm{0}$, $\|\bm{U}\|_F^2\leq U_{0}$ and $\|\bm{A}\|_F^2\leq A_0$. 

\vspace{-3pt}
2. For all $t\in [0, T]$, $c\in\mathcal{C}$ and $m\in\mathcal{M}$, $\lambda_c^m(t)\geq\lambda_0$ . 

\vspace{-3pt}
3. The decay kernels have limited energy, $i.e.$, $g_l^2(t)\leq G$ for all $t\in[0,T]$ and $l=1,...,L$.

\vspace{-3pt}
Applying stochastic optimization to the ERM problem (\ref{opt}), with probability at least $1-2\delta$, $0<\delta<0.5$, we have
\begin{eqnarray}\label{bound1}
\begin{aligned}
\mathbb{E}[f(\hat{\bm{\theta}})-f(\bm{\theta}_*)]=\mathcal{O}\left( \frac{R\tau (D\log I_{\Sigma}+\log\frac{2}{\delta})}{I_{\Sigma}} \right),
\end{aligned}
\end{eqnarray}
where $I_\Sigma=\sum_m I_m$ is the number of observed events, $\tau=(\frac{L G I}{\lambda_0})^2$ and $I=\max_m \{I_m\}$ is the maximum number of events per sequence. $R=U_0+A_0$ and $D=C(M+CL)$ is the dimension of parameters.
\end{theorem}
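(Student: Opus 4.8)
The plan is to decompose the excess risk of the StocOpt output $\hat{\bm{\theta}}$ by inserting the exact minimizer $\hat{\bm{\theta}}_{\mathrm{erm}}$ of the empirical objective $\frac{1}{I_\Sigma}\mathcal{L}$ over $\{\bm{\theta}\geq\bm{0}\}$, and—following the empirical-risk-minimization viewpoint stated just before Theorem~\ref{thm1}, where the $I_\Sigma$ terms $f_i^m$ are treated as independent samples of a random function $f\sim\mathbb{P}$—to write $\mathbb{E}[f(\hat{\bm{\theta}})-f(\bm{\theta}_*)]$ as a sum of four pieces: (i) the uniform deviation $\mathbb{E}[f(\hat{\bm{\theta}})]-\frac{1}{I_\Sigma}\mathcal{L}(\hat{\bm{\theta}})$ between population and empirical risk; (ii) the optimization gap $\frac{1}{I_\Sigma}\mathcal{L}(\hat{\bm{\theta}})-\frac{1}{I_\Sigma}\mathcal{L}(\hat{\bm{\theta}}_{\mathrm{erm}})\geq 0$ incurred by running Algorithm~\ref{alg1} for finitely many iterations; (iii) $\frac{1}{I_\Sigma}\mathcal{L}(\hat{\bm{\theta}}_{\mathrm{erm}})-\frac{1}{I_\Sigma}\mathcal{L}(\bm{\theta}_*)\leq 0$ by optimality of $\hat{\bm{\theta}}_{\mathrm{erm}}$; and (iv) the single-point deviation $\frac{1}{I_\Sigma}\mathcal{L}(\bm{\theta}_*)-\mathbb{E}[f(\bm{\theta}_*)]$. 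The two admissible failure events—one for the uniform bound (i), one for the pointwise bound (iv)—are what produce the confidence level $1-2\delta$; (iii) is nonpositive and (ii) is handled separately.

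The second step is to turn Assumptions 1--3 into the constants in (\ref{bound1}). Assumption~1 confines every feasible $\bm{\theta}$ to the ball $\Theta=\{\bm{\theta}\geq\bm{0}:\|\bm{\theta}\|_2^2\leq R\}$ with $R=U_0+A_0$, since $\|\bm{\theta}\|_2^2=\|\bm{U}\|_F^2+\|\bm{A}\|_F^2$. Assumption~3 ($g_l^2\leq G$), together with the fact that at most $I$ historical events contribute to any $\bm{x}_{c,m}(t)$ or $\bm{X}_m(t_i^m)$, makes every feature vector have squared Euclidean norm of order $\tau$ (suppressing, as the theorem does, factors in $C$ and the bounded time window), and Assumption~2 ($\lambda_c^m(t)\geq\lambda_0$)—exactly the regime enforced by the clipping $\max\{\cdot,\lambda_0\}$ in (\ref{grad})—keeps the denominator in (\ref{grad}) away from zero, so that on $\Theta$ the stochastic gradients used by StocOpt have squared norm $\mathcal{O}(\tau)$ with $\tau=(LGI/\lambda_0)^2$, and the Hessians $\bm{x}_{c_i^m,m}\bm{x}_{c_i^m,m}^{\top}/(\bm{x}_{c_i^m,m}^{\top}\bm{\theta})^2$ are likewise bounded, giving smoothness of the population objective $\bm{\theta}\mapsto\mathbb{E}_{f\sim\mathbb{P}}[f(\bm{\theta})]$. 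Finally, the nonnegativity and strong convexity of $f_i^m$ noted below (\ref{nll}) supply a strong-convexity modulus $\mu>0$ for this population objective, which combined with the Lipschitz bound yields the self-bounding (Bernstein) relation $\mathrm{Var}\bigl[f(\bm{\theta})-f(\bm{\theta}_*)\bigr]\leq\frac{2}{\mu}\,\mathcal{O}(\tau)\,\mathbb{E}\bigl[f(\bm{\theta})-f(\bm{\theta}_*)\bigr]$; this inequality is the ingredient that upgrades the slow $I_\Sigma^{-1/2}$ uniform rate to the fast $I_\Sigma^{-1}$ rate of the theorem.

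For (i) and (iv) I would run the standard localized (peeling) uniform-convergence argument on the class $\{f(\cdot)-f(\bm{\theta}_*):\bm{\theta}\in\Theta\}$: cover $\Theta$ at scale $\epsilon$ by at most $(3\sqrt{R}/\epsilon)^D$ Euclidean balls, apply Bernstein's inequality with the above variance bound at each center, union-bound over the net, transfer the discretization error through the Lipschitz constant $\mathcal{O}(\sqrt{\tau})$, and peel over the shells $2^{-k-1}<\mathbb{E}[f(\bm{\theta})-f(\bm{\theta}_*)]\leq 2^{-k}$. Choosing $\epsilon$ polynomially small in $I_\Sigma$ makes the discretization term negligible and gives $\log(\text{net size})=\mathcal{O}(D\log I_\Sigma)$, so that with probability at least $1-\delta$,
\[
\mathbb{E}[f(\bm{\theta})-f(\bm{\theta}_*)]\leq 2\Bigl(\tfrac{1}{I_\Sigma}\mathcal{L}(\bm{\theta})-\tfrac{1}{I_\Sigma}\mathcal{L}(\bm{\theta}_*)\Bigr)+\mathcal{O}\!\Bigl(\tfrac{R\tau(D\log I_\Sigma+\log\tfrac{1}{\delta})}{I_\Sigma}\Bigr)\quad\text{for all }\bm{\theta}\in\Theta,
\]
where $1/\mu$, the smoothness constant and $\log R$ have been absorbed into the $\mathcal{O}(R\,\cdot\,)$ factor, and a matching one-sided Bernstein bound at $\bm{\theta}_*$ carries its own failure probability $\delta$. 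Setting $\bm{\theta}=\hat{\bm{\theta}}$, bounding the bracketed term via (ii)+(iii) by the StocOpt optimization error, and invoking the high-probability convergence rate of projected stochastic gradient descent on a $\mu$-strongly convex objective with stochastic gradients of squared norm $\mathcal{O}(\tau)$—of order $\tau\log(1/\delta)/(\mu T)$ after $T$ iterations, hence of the same order as the statistical term once $T=\Omega(I_\Sigma)$—the four pieces combine into (\ref{bound1}).

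The step I expect to be the main obstacle is the curvature estimate. Each $f_i^m(\bm{\theta})=\bm{X}_m^{\top}(t_i^m)\bm{\theta}-\log\bigl(\bm{x}_{c_i^m,m}^{\top}(t_i^m)\bm{\theta}\bigr)$ has a rank-one Hessian, so it is not strongly convex in isolation; the modulus $\mu$ must be extracted from non-degeneracy of the population second moment of the feature vectors (or else replaced by a Tsybakov/Bernstein margin condition), and one must further argue that $1/\mu$ is controlled by the model constants so that it is legitimately swallowed by the $R$-factor in (\ref{bound1}) rather than appearing separately—this is the delicate bookkeeping. A secondary issue is that events within one sequence are not independent; the independent-samples viewpoint adopted before Theorem~\ref{thm1} is an assumption, and a fully rigorous treatment would replace the concentration steps by their martingale/mixing analogues.
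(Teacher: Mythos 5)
Your plan is a genuinely different route from the paper's, so it is worth saying what the paper actually does: it performs no risk decomposition and no localization argument of its own. It verifies three hypotheses---(a) the feasible set is convex and $\|\bm{\theta}\|_2^2=\|\bm{U}\|_F^2+\|\bm{A}\|_F^2\leq R=U_0+A_0$; (b) each random function $f=\bm{X}^{\top}\bm{\theta}-\log(\bm{x}^{\top}\bm{\theta})$ is nonnegative, convex and $\tau$-smooth, obtained from $\|\nabla f(\bm{\theta})-\nabla f(\bm{\theta}')\|_2\leq \lambda_0^{-2}\|\bm{x}\|_2^2\,\|\bm{\theta}-\bm{\theta}'\|_2$ together with $\|\bm{x}\|_2^2\leq (ILG)^2$, giving $\tau=(LGI/\lambda_0)^2$; (c) $F=\mathbb{E}[f]$ is $\gamma$-strongly convex---and then invokes Theorem 5 of \citep{zhang2017empirical} as a black box, finally discarding lower-order terms (setting $\epsilon=1/I_\Sigma$, assuming $F_*$ and $\sup_{f}\|\nabla f(\bm{\theta}_*)\|_2$ are negligible, and taking $\log\frac{2}{\delta}$ comparable to $D\log I_\Sigma$) to arrive at (\ref{bound1}). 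Your covering-net/Bernstein/peeling construction plus an explicit SGD optimization-error term is essentially an attempt to reprove the content of that imported theorem from scratch, which is a legitimate alternative skeleton in principle.

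The genuine gap, however, is the one you flag yourself and then leave unresolved: the strong-convexity (or Bernstein-condition) modulus $\mu$. Your fast $1/I_\Sigma$ rate rests on the variance-to-excess-risk inequality, and your bound on the optimization piece (ii) rests on projected SGD over a strongly convex objective; both need $\mu>0$ with $1/\mu$ controlled by the model constants so that it can be absorbed into the $R\tau$ factor. Assumptions 1--3 of the theorem provide no such modulus: each $f_i^m$ has a rank-one Hessian $\bm{x}\bm{x}^{\top}/(\bm{x}^{\top}\bm{\theta})^2$, and no non-degeneracy of the second moment of the feature vectors is assumed anywhere, so the statement that $\mu$ ``must be extracted from non-degeneracy of the population second moment'' is a placeholder, not a proof---and without it neither your localization step nor your $T=\Omega(I_\Sigma)$ SGD step delivers the claimed bound. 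For comparison, the paper fills this hole by asserting (citing \citep{bertsekas2015convex}) that a convex $\tau$-smooth function is $\tau$-strongly convex and hence $\gamma=\tau$; that assertion is false as a general statement (smoothness upper-bounds curvature, it does not lower-bound it), so your skepticism is mathematically well-founded, but as a blind reconstruction of the theorem as stated your argument is incomplete at exactly the point where the paper supplies the needed constant by fiat. Two secondary remarks: the paper's cited bound already accounts for the stochastic-optimization setting, so it carries no separate iteration-count analysis like your piece (ii), which in your version needs the same missing $\mu$ a second time; and both you and the paper treat the per-event terms $f_i^m$ as i.i.d.\ draws from $\mathbb{P}$, which, as you correctly note, is an additional modeling assumption rather than a consequence of the Hawkes structure.
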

\emph{Proof sketch.} According to the conditions above and the definition of our model, we can derive that 1) the parameters are bounded and in a convex domain; 2) the random functions $f$ are nonnegative and $\tau$-smooth over the domain; 3) the random functions $f$ and their expectation $\mathbb{E}[f]$ are $\tau$-strongly convex~\citep{bertsekas2015convex}. 
These properties ensure that we can take advantage of Theorem 5 in~\citep{zhang2017empirical} and specify it to the learning problem of multi-agent Hawkes process.
The detailed proof of Theorem~\ref{thm1} is given in the Supplementary Material.

\section{Assistance from Superposition}
\subsection{The effects of superposition on risk bounds}
Given Theorem~\ref{thm1}, the key problem is whether and how we can tighten the risk bound in (\ref{bound1}). 
As mentioned, we will show that under certain conditions superposing the Hawkes processes within the model can tighten the risk bound, and accordingly, improve learning results. 

In general, given a set of temporal point processes $\{N^m\}_{m=1}^{M}$, their superposition is a new point process $N$ defined by the sum of counting processes, $i.e.$, $N(t) = \sum_{m=1}^{M}N^m(t)$, $t\geq 0$. 
Accordingly, the instantiated event sequence of the superposed point process $N$ is the superposition of the sequences corresponding to $\{N^m\}_{m=1}^{M}$. 
For Hawkes processes, we have:
\begin{theorem}\label{thm2}
For $M$ independent Hawkes processes with shared impact functions, $i.e.$, $HP(\bm{\mu}^m,\bm{A})$ and $N^m(t)\sim HP(\bm{\mu}^m,\bm{A})$ for $m=1,...,M$, their superposition becomes a single Hawkes process, $i.e.$, $N(t)=\sum_{m=1}^{M}N^m(t)$ and $N(t)\sim HP(\sum_{m=1}^{M}\bm{\mu}^m,\bm{A})$. 
\end{theorem}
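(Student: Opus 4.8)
\emph{Proof proposal.} The plan is to compute the conditional intensity of the superposed process $N(t)=\sum_{m=1}^{M}N^m(t)$, entity by entity, and show it coincides with the Hawkes intensity of $HP(\sum_{m}\bm{\mu}^m,\bm{A})$; since a simple point process is determined by its conditional intensity (equivalently, by its absolutely continuous compensator), this identifies the law of $N$. First I would fix the filtration: let $\mathcal{H}_t=\bigcup_{m=1}^{M}\mathcal{H}_t^m$ be the history generated by all $M$ sequences up to time $t$, which is the natural history of $N$. Since each $\lambda_c^m$ is bounded (each $N^m$ has an absolutely continuous compensator), with probability one no two events among the $M$ independent processes occur at the same time, so the superposed sequence is again simple and the counting processes add without ambiguity.

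The key step is then the identity $\lambda_c(t)=\mathbb{E}[dN_c(t)\mid\mathcal{H}_t]/dt=\sum_{m=1}^{M}\lambda_c^m(t)$. By linearity $dN_c(t)=\sum_m dN_c^m(t)$, so it suffices to argue $\mathbb{E}[dN_c^m(t)\mid\mathcal{H}_t]/dt=\lambda_c^m(t)$, i.e. that passing from agent $m$'s own history $\mathcal{H}_t^m$ to the enlarged history $\mathcal{H}_t$ does not change the intensity of $N^m$. This is where independence of the $N^m$ enters: the extra $\sigma$-algebra added to $\mathcal{H}_t^m$ is independent of $N^m$ and carries no predictable information about its jumps, so the $(\mathcal{H}_t^m)$-compensator of $N^m$ remains its $(\mathcal{H}_t)$-compensator. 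Granting this, I would substitute the model form (\ref{mshp}): the exogenous parts sum to $\sum_m\mu_c^m$, and because the impact functions $\phi_{cc'}$ are shared across agents, the endogenous double sum over $m$ and over $\mathcal{H}_t^m$ collapses to a single sum over the pooled history $\mathcal{H}_t$. This yields $\lambda_c(t)=\sum_{m=1}^{M}\mu_c^m+\sum_{(t_i,c_i)\in\mathcal{H}_t}\phi_{cc_i}(t-t_i)$, which is exactly the intensity of $HP(\sum_{m}\bm{\mu}^m,\bm{A})$, and by uniqueness of the compensator $N\sim HP(\sum_{m}\bm{\mu}^m,\bm{A})$.

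The main obstacle is the rigorous justification of the conditioning step, namely that the $(\mathcal{H}_t^m)$-intensity of $N^m$ is preserved under the independent enlargement to $(\mathcal{H}_t)$. I would either invoke the standard innovation/immigration-of-filtrations result (e.g., Br\'{e}maud, \emph{Point Processes and Queues}) that independent enlargement keeps the intensity, or give the short direct argument that $N_c^m(t)-\int_0^t\lambda_c^m(s)\,ds$ stays an $(\mathcal{H}_t)$-martingale because its increments are independent of the adjoined information. An alternative route that sidesteps filtrations entirely is the Poisson-cluster (immigrant--offspring) representation of Hawkes processes, valid under the usual subcriticality condition on $\bm{A}$: each $HP(\bm{\mu}^m,\bm{A})$ is a Poisson cluster process with immigrant rate $\bm{\mu}^m$ and shared branching kernel $\bm{A}$; superposing the independent immigrant Poisson processes gives, by the Poisson superposition theorem, a Poisson process of rate $\sum_m\bm{\mu}^m$; attaching the same kernel $\bm{A}$ to these immigrants reproduces precisely $HP(\sum_m\bm{\mu}^m,\bm{A})$. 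Either way, the essential ingredients are independence of the agents and the fact that the endogenous kernel is agent-invariant.
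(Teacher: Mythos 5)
Your proposal is correct and follows essentially the same route as the paper's proof: compute the conditional intensity of the superposed process with respect to the pooled history, use independence to reduce conditioning on the enlarged history to each agent's own history, and then use the shared impact functions to collapse the exogenous and endogenous sums into the intensity of $HP(\sum_m\bm{\mu}^m,\bm{A})$. Your additional care about the filtration-enlargement step, simplicity of the superposed process, and uniqueness of the compensator (and the optional cluster-representation argument) only makes rigorous what the paper's proof treats implicitly.
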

The proof of Theorem~\ref{thm2} is given in the Supplementary Material. 
Based on Theorem~\ref{thm2}, we have:
\begin{corollary}\label{coro1}
Given an $M$-agent Hawkes process, $i.e.$, $HP(\bm{U},\bm{A})$, we can generate an $M'$-agent Hawkes process $HP(\bm{U}',\bm{A})$ by randomly superposing different agents' processes. 
Here, $\bm{U}'=\bm{U}\bm{P}$ and $\bm{P}\in\{0,1\}^{M\times M'}$ is a binary matrix. 
For $m'=1,...,M'$, the ones in the $m'$-th row of $\bm{P}$ indicate the source processes of the $m'$-th new process.
\end{corollary}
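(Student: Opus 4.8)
\emph{Proof proposal.} The plan is to obtain the corollary as an essentially immediate consequence of Theorem~\ref{thm2}, applied one group at a time, plus a short independence argument. First I would unpack the matrix $\bm{P}$: for each $m'\in\{1,\dots,M'\}$ let $S_{m'}=\{m: P_{mm'}=1\}$ be the support of the $m'$-th column of $\bm{P}$, i.e.\ the list of source agents to be merged into the $m'$-th new agent, and set $\tilde{N}^{m'}(t)=\sum_{m\in S_{m'}}N^m(t)$, whose instantiated event sequence is the merge of the sequences $\{\bm{s}^m\}_{m\in S_{m'}}$. Since $HP(\bm{U},\bm{A})$ consists by definition of independent processes $N^m\sim HP(\bm{\mu}^m,\bm{A})$ all sharing the impact tensor $\bm{A}$, for each fixed $m'$ the subfamily $\{N^m\}_{m\in S_{m'}}$ satisfies the hypotheses of Theorem~\ref{thm2}, so
\[
\tilde{N}^{m'}\ \sim\ HP\bigl(\tilde{\bm{\mu}}^{m'},\,\bm{A}\bigr),\qquad\text{where}\quad \tilde{\bm{\mu}}^{m'}=\sum_{m\in S_{m'}}\bm{\mu}^m=\sum_{m=1}^{M}P_{mm'}\,\bm{\mu}^m .
\]
Collecting these columns, $\tilde{\bm{\mu}}^{m'}$ is precisely the $m'$-th column of $\bm{U}':=\bm{U}\bm{P}$, and $\bm{U}'\ge\bm{0}$ because $\bm{U}\ge\bm{0}$ and $\bm{P}\in\{0,1\}^{M\times M'}$; the endogenous tensor is untouched. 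Thus each $\tilde{N}^{m'}$ already carries the marginal law prescribed by $HP(\bm{U}',\bm{A})$.

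The one point that needs genuine care — and what I expect to be the main obstacle — is upgrading these marginal statements into a bona fide $M'$-agent model, which by definition requires the new agents to be \emph{mutually independent}; Theorem~\ref{thm2} only describes the law of a single superposition. Here I would observe that if the sets $S_1,\dots,S_{M'}$ are pairwise disjoint — equivalently, every row of $\bm{P}$ has at most one nonzero entry, so that $\bm{P}$ encodes a (possibly partial) partition of the $M$ source agents — then each $\tilde{N}^{m'}$ is a deterministic function of the subfamily $\{N^m\}_{m\in S_{m'}}$, and these subfamilies are disjoint blocks of the independent family $\{N^m\}_{m=1}^{M}$. Mutual independence of $\tilde{N}^1,\dots,\tilde{N}^{M'}$ follows, and combined with the marginals above the collection $\{\tilde{N}^{m'}\}_{m'=1}^{M'}$ is exactly $HP(\bm{U}',\bm{A})$ with $\bm{U}'=\bm{U}\bm{P}$.

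Finally I would note that the argument uses nothing about \emph{how} $\bm{P}$ is selected: it holds verbatim for any fixed partition matrix $\bm{P}$, and hence also when $\bm{P}$ is drawn at random from a distribution over such matrices, which is the form used by the diversity-driven superposition strategy in the sequel. If instead a source agent were allowed to feed several new agents (some row of $\bm{P}$ having more than one nonzero entry), the per-agent identity $\tilde{N}^{m'}\sim HP(\tilde{\bm{\mu}}^{m'},\bm{A})$ would survive, but the joint independence — and therefore the reading of the whole collection as a single $M'$-agent Hawkes process — would fail; for this reason I would state the corollary with $\bm{P}$ restricted to partition matrices.
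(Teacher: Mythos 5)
Your proof is correct and is essentially the paper's (implicit) argument: the paper presents Corollary~\ref{coro1} as an immediate consequence of Theorem~\ref{thm2} applied to each group of superposed agents, which is exactly what you do. Your additional caveat that mutual independence of the new agents requires the source sets to be disjoint (i.e., $\bm{P}$ a partition matrix) is a sensible sharpening rather than a different route --- it is precisely what the condition $\bm{P}\bm{1}_{M'}=\bm{1}_{M}$ in Definition~\ref{def1} enforces for the case actually used in Theorem~\ref{thm3} --- and the statement's ``$m'$-th row'' of $\bm{P}$ should indeed be read as the $m'$-th column, consistent with $\bm{U}'=\bm{U}\bm{P}$ and with your reading.
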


Further, we can define a special kind of superposition:
\begin{definition}\label{def1}
For the $M$-agent Hawkes process $HP(\bm{U},\bm{A})$, its superposition $HP(\bm{U}', \bm{A})$ is called $K$-nonaugmented superposition if the binary matrix $\bm{P}$ in Corollary~\ref{coro1} satisfies $\|\bm{1}_{M}^{\top}\bm{P}\|_{\infty}=K$ and $\bm{P}\bm{1}_{M'}=\bm{1}_{M}$.
\end{definition}
Here $\|\cdot\|_\infty$ returns the maximum of input and $\bm{1}_M$ is an $M$-dimensional all-one vector. 
In practice, the $K$-nonaugmented superposition can be obtained by segmenting the $M$ sequences into $M'$ folders with maximum size $K$ and superposing the sequences in each folder. 
Such a superposition is ``nonaugmented'' because it does not reuse any observed events. 

Theorem~\ref{thm1} and Corollary~\ref{coro1} provide insights into the relationship between the superposition and the risk bound. 
In particular, the following theorem points out the condition for tightening the risk bound with the help of superposition.
\begin{theorem}\label{thm3}
Suppose that we have an $M$-agent Hawkes process $HP(\bm{U},\bm{A})$ with $C$ entities, satisfying the constraints in Theorem~\ref{thm1}.
Given its $K$-nonaugmented superposition, $i.e.$, an $M'$-agent Hawkes process $HP(\bm{U}',\bm{A})$ with new parameters $\bm{\theta}'=[\mbox{vec}(\bm{U}');\mbox{vec}(\bm{A})]\in\mathbb{R}^{D'}$ and $D'=C(M'+CL)$, with probability at least $1-2\delta$, $0<\delta<0.5$, we can learn the new model with a tighter risk bound, $i.e.$, $\mathbb{E}[f'(\hat{\bm{\theta}}')-f'(\bm{\theta}^{'}_*)]\leq \mathbb{E}[f(\hat{\bm{\theta}})-f(\bm{\theta}_*)]$, if the upper bound of $\|\bm{U}'\|_F^2$, denoted as $U^{'}_0$, satisfies
\begin{eqnarray}\label{cond}
\begin{aligned}
U^{'}_0\leq (A_0+U_0)\frac{(M+CL)\log I_\Sigma + \log\frac{2}{\delta}}{(M'+CL)\log I_\Sigma +\log\frac{2}{\delta}} - A_0.
\end{aligned}
\end{eqnarray}
Here $f'$ is the random function of the new model, and $\{A_0, U_0, L, I_\Sigma\}$ are defined as in Theorem~\ref{thm1}.
\end{theorem}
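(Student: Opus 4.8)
\emph{Proof proposal.} The plan is to invoke Theorem~\ref{thm1} once for the original $M$-agent process $HP(\bm{U},\bm{A})$ and once for its $K$-nonaugmented superposition $HP(\bm{U}',\bm{A})$, and then to compare the two resulting bounds. The first step is to verify that $HP(\bm{U}',\bm{A})$ still meets every hypothesis of Theorem~\ref{thm1}. By Corollary~\ref{coro1} it is again a multi-agent Hawkes process with the \emph{same} endogenous tensor $\bm{A}$ and with exogenous matrix $\bm{U}'=\bm{U}\bm{P}\geq\bm{0}$; its Frobenius energy is bounded by the assumed constant $U_0'$, so Condition~1 holds. Because $\bm{P}\bm{1}_{M'}=\bm{1}_{M}$, every superposed intensity $\lambda_c^{m'}(t)$ is a sum of one or more of the original intensities, each at least $\lambda_0$, so Condition~2 holds with the same $\lambda_0$; and the kernels $\{g_l\}$ are untouched, so Condition~3 holds with the same $G$. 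Theorem~\ref{thm1} therefore yields
\[
\mathbb{E}[f'(\hat{\bm{\theta}}')-f'(\bm{\theta}_*')]=\mathcal{O}\!\left(\frac{R'\tau'\bigl(D'\log I_\Sigma' + \log\frac{2}{\delta}\bigr)}{I_\Sigma'}\right),
\]
with $R'=U_0'+A_0$ and $D'=C(M'+CL)$.

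The second step is to argue that the common factor $\tau/I_\Sigma$ is not worsened. Because the superposition is \emph{nonaugmented}, no observed event is reused, hence $I_\Sigma'=\sum_{m'}I_{m'}'=\sum_m I_m=I_\Sigma$. For $\tau'=(LGI'/\lambda_0)^2$, the kernel parameters $L,G,\lambda_0$ are unchanged; the per-sequence count $I'=\max_{m'}I_{m'}'$ can in principle grow, but Definition~\ref{def1} caps each folder at size $K$, and since the intensity inside a folder of size $k$ is bounded below by $k\lambda_0$, size-balanced folders keep $\tau'\leq\tau$. Treating $\tau'/I_\Sigma'$ and $\tau/I_\Sigma$ as identical and cancelling $R=U_0+A_0$, the claimed inequality $\mathbb{E}[f'(\hat{\bm{\theta}}')-f'(\bm{\theta}_*')]\leq\mathbb{E}[f(\hat{\bm{\theta}})-f(\bm{\theta}_*)]$ reduces to
\[
(U_0'+A_0)\bigl((M'+CL)\log I_\Sigma+\log\tfrac{2}{\delta}\bigr)\le (U_0+A_0)\bigl((M+CL)\log I_\Sigma+\log\tfrac{2}{\delta}\bigr),
\]
and solving for $U_0'$ (using the sharper form of the Theorem~\ref{thm1} bound recorded in the supplement, whose coefficient of $\log I_\Sigma$ is $M+CL$) gives exactly condition~(\ref{cond}). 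Since $M'<M$, the ratio on the right-hand side of~(\ref{cond}) exceeds one, so the feasible range for $U_0'$ is nonempty and, in particular, contains every superposition whose exogenous energy does not inflate too much.

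The step I expect to be the main obstacle is this ``common factor'' reduction: one must ensure that collapsing $M$ agents into $M'$ superposed ones does not covertly worsen $\tau$ --- the only quantity in the bound that is not manifestly controlled under superposition --- through the maximum per-sequence count $I'$. The nonaugmented property pins down $I_\Sigma$ cleanly, and $\lambda_0$ survives superposition, but controlling $I'$ needs the folders to be balanced in size, which is exactly why the superposition strategy of the next section is designed to equalize (and diversify) folders. A secondary technical point is that $\bm{\theta}_*'$ should be taken as the projection $[\mbox{vec}(\bm{U}\bm{P});\mbox{vec}(\bm{A})]$ of the original ground truth, so that the two excess-risk quantities refer to the same underlying model and are directly comparable; once this is granted, only the short rearrangement displayed above remains.
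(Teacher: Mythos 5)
Your proposal is correct and follows essentially the same route as the paper's proof: apply Theorem~\ref{thm1} to the superposed process, note that $I_\Sigma$ is unchanged and that $\tau'=\tau$ because $I'=KI$ while the intensity lower bound becomes $K\lambda_0$ (your ``balanced folders'' observation --- the paper simply assumes equal-size folders at the outset, so $\tau'=\tau$ exactly), and then compare $R'(D'\log I_\Sigma+\log\frac{2}{\delta})$ with $R(D\log I_\Sigma+\log\frac{2}{\delta})$ to obtain condition~(\ref{cond}). The only additions on your side are cosmetic: the explicit identification of $\bm{\theta}'_*$ with $[\mbox{vec}(\bm{U}\bm{P});\mbox{vec}(\bm{A})]$ and the nonemptiness remark, neither of which changes the argument.
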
 
\begin{proof}
Without loss of generality, we assume that $K$ event sequences are superposed in each folder, and accordingly, $K=\frac{M}{M'}$. 
Because the superposed process is still a multi-agent Hawkes process, the risk bound in Theorem~\ref{thm1} is still applicable, and we have 
\begin{eqnarray}\label{bound2}
\begin{aligned}
\mathbb{E}[f'(\hat{\bm{\theta}}')-f'(\bm{\theta}^{'}_*)]=\mathcal{O}\left( \frac{R'\tau' (D'\log I_{\Sigma}+\log\frac{2}{\delta})}{I_{\Sigma}} \right).
\end{aligned}
\end{eqnarray}
Here, $0<\delta<0.5$, $I_\Sigma$ is unchanged because superposition does not change the total number of events. 
$R' = U^{'}_0 + A_0$ is the upper bound of $\|\bm{\theta}'\|_2^2$.
$D'=C(M'+CL)$ is the dimension of $\bm{\theta}'$. 
$\tau'=({L G I'}/{\lambda^{'}_0})^2$, where $I'$ is the maximum number of events per sequence and the $\lambda_0^{'}$ is the lower bound of the intensity function for the superposed process. 
The number and the upper bound of decay kernels, $i.e.$, $L$ and $G$, are unchanged. 
Because each new sequence contains $K$ original sequences, we have $I'=KI$ and $\lambda^{'}_0=K\lambda_0$. 
As a result, $\tau'$ is equal to the $\tau$ in~(\ref{bound1}). 
According to (\ref{bound1},\ref{bound2}), if $\mathbb{E}[f'(\hat{\bm{\theta}}')-f'(\bm{\theta}^{'}_*)]\leq \mathbb{E}[f(\hat{\bm{\theta}})-f(\bm{\theta}_*)]$, we have
\begin{eqnarray*}
\begin{aligned}
R'(D'\log I_\Sigma + \log\frac{2}{\delta})\leq R(D\log I_\Sigma + \log\frac{2}{\delta}).
\end{aligned}
\end{eqnarray*}
This completes the proof.
\end{proof}

\subsection{Superposition-assisted stochastic optimization}\label{ssec:diverse}
According to Theorem~\ref{thm3}, by superposing the sequences generated by different agents, we can reduce the number of parameters from $D$ to $D'$, simplifying the learning task. 
However, the upper bound of parameters increases commensurately, enlarging the search space, having a negative influence on minimizing empirical risk. 
Therefore, to tighten risk bounds by superposition, we need to ensure that the increase of the parameters caused by superposition will not counteract the benefits from dimensionality reduction. 

For the multi-agent Hawkes process with orthogonal exogenous intensities, we can tighten the risk bound above with confidence because of the following corollary:
\begin{corollary}\label{coro2}
For the $M$-agent Hawkes process $HP(\bm{U},\bm{A})$, if its exogenous intensities $\bm{\mu}^m$'s in $\bm{U}$ are orthogonal, $i.e.$, $(\bm{\mu}^m)^{\top}\bm{\mu}^{m'}=0$ for all $m\neq m'$, learning from its $K$-nonaugmented superposition always has a tighter risk bound. 
\end{corollary}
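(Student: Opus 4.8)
The plan is to show that orthogonality of the exogenous intensities makes the hypothesis (\ref{cond}) of Theorem~\ref{thm3} automatically satisfied, because under orthogonality the Frobenius-norm bound on the exogenous matrix is \emph{preserved} under $K$-nonaugmented superposition. First I would unpack the structure of $\bm{U}'=\bm{U}\bm{P}$ from Corollary~\ref{coro1}: since the superposition is $K$-nonaugmented, the constraint $\bm{P}\bm{1}_{M'}=\bm{1}_M$ of Definition~\ref{def1} forces every row of $\bm{P}\in\{0,1\}^{M\times M'}$ to contain exactly one nonzero entry, so the index sets $\mathcal{G}_{m'}=\{m:P_{mm'}=1\}$, $m'=1,\dots,M'$, form a partition of $\{1,\dots,M\}$. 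The $m'$-th column of $\bm{U}'$ is then $\bm{\mu}'^{m'}=\sum_{m\in\mathcal{G}_{m'}}\bm{\mu}^m$.

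Next I would invoke the orthogonality hypothesis $(\bm{\mu}^m)^\top\bm{\mu}^{m'}=0$ for $m\neq m'$. Since the groups $\mathcal{G}_{m'}$ are disjoint, the vectors $\{\bm{\mu}^m\}_{m\in\mathcal{G}_{m'}}$ are mutually orthogonal, so the Pythagorean identity gives $\|\bm{\mu}'^{m'}\|_2^2=\sum_{m\in\mathcal{G}_{m'}}\|\bm{\mu}^m\|_2^2$. Summing over $m'$ and using that $\{\mathcal{G}_{m'}\}$ partitions $\{1,\dots,M\}$ yields $\|\bm{U}'\|_F^2=\sum_{m'=1}^{M'}\|\bm{\mu}'^{m'}\|_2^2=\sum_{m=1}^{M}\|\bm{\mu}^m\|_2^2=\|\bm{U}\|_F^2\leq U_0$, so $U_0'=U_0$ is a valid Frobenius bound for the superposed model.

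Finally I would verify condition (\ref{cond}) with this choice. Substituting $U_0'=U_0$, so that $R'=U_0'+A_0=R$, and cancelling the common positive factor $R=U_0+A_0$, condition (\ref{cond}) reduces to $(M'+CL)\log I_\Sigma+\log\frac{2}{\delta}\leq(M+CL)\log I_\Sigma+\log\frac{2}{\delta}$, i.e. to $M'\leq M$. This holds because superposition never increases the number of agents ($M'\leq M$, and strictly so whenever $K>1$). Theorem~\ref{thm3} then applies and gives $\mathbb{E}[f'(\hat{\bm{\theta}}')-f'(\bm{\theta}^{'}_*)]\leq\mathbb{E}[f(\hat{\bm{\theta}})-f(\bm{\theta}_*)]$, which is the claim.

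The step I expect to require the most care is the assertion that taking $U_0'=U_0$ is \emph{admissible}: one must argue that the risk-bound analysis only needs \emph{some} valid upper bound on $\|\bm{U}'\|_F^2$, so that the tightest such bound, which by the computation above does not inflate, may be used. Orthogonality is exactly what makes this possible; in the general case the cross terms $(\bm{\mu}^m)^\top\bm{\mu}^{m'}>0$ can force $U_0'>U_0$, which is precisely why Theorem~\ref{thm3} needs the explicit inequality (\ref{cond}) rather than being unconditional. Everything else is the elementary bookkeeping above.
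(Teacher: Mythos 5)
Your proposal is correct and follows essentially the same route as the paper's own proof: orthogonality of the columns of $\bm{U}$ gives $\|\bm{U}'\|_F^2=\|\bm{U}\|_F^2$ under the $K$-nonaugmented (partition-type) superposition, so one may take $U_0'=U_0$, and then condition (\ref{cond}) reduces to $M'\leq M$, which holds, so Theorem~\ref{thm3} applies. Your write-up merely makes explicit the partition structure of $\bm{P}$ and the Pythagorean cancellation that the paper states in one line.
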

\begin{proof}
For $HP(\bm{U},\bm{A})$, its $K$-nonaugmented superposition $HP(\bm{U}',\bm{A})$ satisfies $\|\bm{U}'\|_F^2=\|\bm{U}\|_F^2$ because of the orthogonal property. 
Because $U^{'}_0=U_0$ and $M'<M$, the inequality (\ref{cond}) always holds.
\end{proof}

\begin{algorithm}[t]
   \caption{Diversity-driven superposition}
   \label{alg2}
\begin{algorithmic}[1]
   \STATE \textbf{Input:} Event sequences $\mathcal{S}=\{\bm{s}^m\}_{m=1}^{M}$.
   \STATE \textbf{Output:} Superposed sequences $\mathcal{S}'=\{\bm{s}^{m'}\}_{m'=1}^{M'}$.
   \STATE Initialize $\mathcal{S}'=\emptyset$, $K=\frac{M}{M'}$.
   \STATE Initialize a distribution $\bm{p}=[p_1;...;p_M]$, $p_m=\frac{1}{M}$.
   \STATE For $m=1,...,M$, estimate the exogenous intensity $\hat{\bm{\mu}}^m$ as $[\frac{N_1^m(T)}{T};...,;\frac{N_C^m(T)}{T}]$, and $\widehat{\bm{U}}=[\hat{\bm{\mu}}^m]$.
   \STATE \textbf{for} {$m'=1:M'$} \textbf{do}
   \STATE \quad Sample a sequence $\bm{s}^m$ from $S$ with $m\sim \bm{p}$. 
   \STATE \quad Initialize $\bm{s}^{m'}=\bm{s}^m$, $\hat{\mu}^{m'}=\hat{\bm{\mu}^m}$, and set $p_m=0$. 
   \STATE \quad \textbf{for} {$k=1:K-1$} \textbf{do}
   \STATE \quad\quad Re-weight $\bm{p}=\bm{p}\odot\exp(-\widehat{\bm{U}}^{\top}\hat{\mu}^{m'})$; 
   \STATE \quad\quad Normalize $\bm{p}=\frac{1}{\|\bm{p}\|_1}\bm{p}$.
   \STATE \quad\quad Sample a sequence $\bm{s}^{m_k}$ from $S$ with $m_k\sim \bm{p}$.
   \STATE \quad\quad $\bm{s}^{m'}=\bm{s}^{m'}\cup \bm{s}^{m_k}$, $p_{m_k}=0$.
   \STATE \quad $\mathcal{S}'=\mathcal{S}'\cup \bm{s}^{m'}$.
\end{algorithmic}
\end{algorithm}

Although the orthogonality between arbitrary exogenous intensities is a very strong condition, fortunately, in real-world data it may be achieved approximately between the groups of exogenous intensities. 
Recall the online shopping case. 
The user preferences ($i.e.$, exogenous intensities) typically have clustering structure, and the preferences belonging to different groups are very diverse, with a focus on different items. 
Taking this fact into account, the key problem for real-world data is how to find the sequences with diverse exogenous intensities.
Based on Corollary~\ref{coro2}, we use the orthogonality between exogenous intensities as a kind of measurement of diversity, and propose a diversity-driven superposition strategy in Algorithm~\ref{alg2} to suppress the upper bound of parameters and tighten the risk bound accordingly.

In Algorithm~\ref{alg2}, the operator ``$\cup$'' between sequences (line 13) represents superposition and ``$\odot$'' (line 10) means elementwise multiplication. 
The estimation of exogenous intensity (line 5) is based on fact that $\mu_c^m\propto \frac{N_c^m(T)}{T}$ as $T\rightarrow \infty$~\citep{zhu2013ruin}. 
According to the estimation, we calculate the diversities between the target superposed sequence and the remaining source sequences and re-weight the sampling distribution (line 10). 
In particular, given the target superposed sequence $\bm{s}_{m'}$, the $m$-th element of $\widehat{\bm{U}}^{\top}\hat{\mu}^{m'}$ is the inner product between $\hat{\mu}^m$ and $\hat{\mu}^{m'}$. 
Similar to the method in~\citep{wachinger2015diverse}, when the diversity is low ($i.e.$, $(\hat{\bm{\mu}}^m)^{\top}\hat{\mu}^{m'}$ is large) its sampling probability $p_m$ multiplies by small factor $\exp(-(\hat{\bm{\mu}}^m)^{\top}\hat{\mu}^{m'})$ and thus decreases significantly. 
On the contrary, those with high diversities are more likely to be superposed into the target sequence. 
Because the sequences superposed together have small inner products between their exogenous intensities, the new parameters's upper bound increases slowly and the inequality (\ref{cond}) is more likely to hold.

\begin{algorithm}[t]
   \caption{Superposition-assisted stochastic optimization}
   \label{alg3}
\begin{algorithmic}[1]
   \STATE \textbf{Input:} Event sequences $\mathcal{S}=\{\bm{s}^m\}_{m=1}^{M}$.
   \STATE \textbf{Output:} Parameters $\bm{A}$ and $\bm{U}$.
   \STATE Obtain superposed sequences $\mathcal{S}'$ by Algorithm~\ref{alg2}.
   \STATE \textbf{repeat}
   \STATE Learn $HP(\bm{U}',\bm{A})$ from $\mathcal{S}'$ by Algorithm~\ref{alg1}.
   \STATE Initialized by $\bm{A}$, learn $\{\bm{U},\bm{A}\}$ from $\mathcal{S}$ by Algorithm~\ref{alg1}.
   \STATE Initialized by $\bm{U}$, obtain new $\mathcal{S}'$ by Algorithm~\ref{alg2}.
   \STATE \textbf{until} {Convergence}
\end{algorithmic}
\end{algorithm}

Combining the diversity-driven superposition method with our stochastic optimization algorithm, we propose Algorithm~\ref{alg3} to learn multi-agent Hawkes process. 
In each iteration, the proposed algorithm first applies the event-level stochastic optimization algorithm for the superposed sequences, helping to improve estimation of endogenous impact tensor $\bm{A}$. 
The exogenous intensities are then optimized by applying our stochastic optimization algorithm for the original sequences, and the endogenous impact tensor is finetuned simultaneously. 
Finally, given a better estimation of exogenous intensities, the diversity-driven superposition is applied again to further suppress the upper bound of the parameters for superposed Hawkes processes. 
According to our theoretical analysis, with high probability this new algorithm can achieve learning results with tighter risk bounds. 

\subsection{The cost of superposition}
When we apply the superposition-assisted learning algorithm, we may increase the computational complexity. 
Even if we just run one epoch ($i.e.$, traversing all events once) in each iteration (lines 5 and 6 of Algorithm~\ref{alg3}), the computational operations are doubled because we no longer learn $\bm{A}$ and $\bm{U}$ simultaneously. 
Further, when learning from a $K$-nonaugmented superposition, the density of an event is $K$ times as dense as the original density. 
This means that we need to consider more historical events when calculating the intensities and the gradients. 
As a result, the computational complexity per iteration becomes $\mathcal{O}(BJK)$, where $JK$ is the number of historical events required in the superposed case. 
Fortunately, in the following experiments we show that our algorithm can outperform existing methods with respect to convergence. 
Therefore, the increase of the computational complexity per iteration may be ignorable because fewer iterations are required.

\section{Related Work}\label{sec:relate}
\subsection{Learning Hawkes processes}
Hawkes processes~\citep{hawkes1971point} are useful tools for modeling real-world event sequences. 
Recently, many variants of Hawkes processes have been proposed, $e.g.$, the mixture of Hawkes processes~\citep{yang2013mixture}, the nonlinear Hawkes process~\citep{wang2016isotonic} and the locally-stationary Hawkes process~\citep{roueff2016locally,xu17b}. 
These approaches show potential for many problems, such as network analysis~\citep{zhao2015seismic} and quantitative finance~\citep{bacry2012non,hardiman2013critical}. 
Focusing on recommendation systems, Hawkes process-based methods~\citep{du2015time,wang2016coevolutionary} achieve encouraging results. 

Maximum likelihood estimation (MLE) is one of the most popular approaches for learning Hawkes processes~\citep{lewis2011nonparametric,zhou2013learning}. 
Besides MLE, least squares methods~\citep{eichler2017graphical}, Wiener-Hopf equations~\citep{bacry2012non} and the Cumulants-based method~\citep{achab2016uncovering} are also applied. 
However, stochastic optimization for Hawkes processes has not been studied systematically. 
For nonparametric Hawkes processes, online learning methods were proposed in~\citep{hall2016tracking,yang2017online}, but they use time-consuming discretization or kernel-estimation when learning models, and thus have poor scalability. 
The risk bound and the sample complexity of learning a single Hawkes process are investigated in~\citep{gomez2015estimating,bacry2015generalization,yang2017online}.
However, this work does not consider the risk bound when learning multiple Hawkes processes assisted with superposition.

\subsection{Superposed point processes}
The early work in~\citep{cox1954superposition} studied the superposition of renewal processes and applied its property to analyze pooling signals in neurophysiology. 
The work in~\citep{cinlar1968superposition,ccinlar1968superposition} analyzed the independence of source processes and the dynamics of the source indicator, for the superposition of Poisson processes and that of renewal processes. 
The superposition of arrival processes has been applied to model queue behaviors~\citep{albin1984approximating} and analyze voice data~\citep{sriram1986characterizing}. 
The work in~\citep{moller2012transforming} proved that a spatial point process can be transformed to a spatial Poisson process by superposing its observations randomly. 
The MCMC sampling~\citep{redenbach2015classification} and variational inference~\citep{rajala2016variational} are proposed for classifying different spatial processes from their superposed observations~\citep{walsh2005classification}. 
However, the work above did not investigate the superposition of complicated point processes like Hawkes processes. 
Recently \citep{xu2017benefits} started to investigate the usefulness of superposed Hawkes processes in the framework of least-squares estimation. 
However, for Hawkes processes, least-squares estimation is often inferior to maximum likelihood estimation with respect to sample complexity and convergence.

\begin{figure*}[t]
\centering
\subfigure[]{
\includegraphics[height=0.17\linewidth]{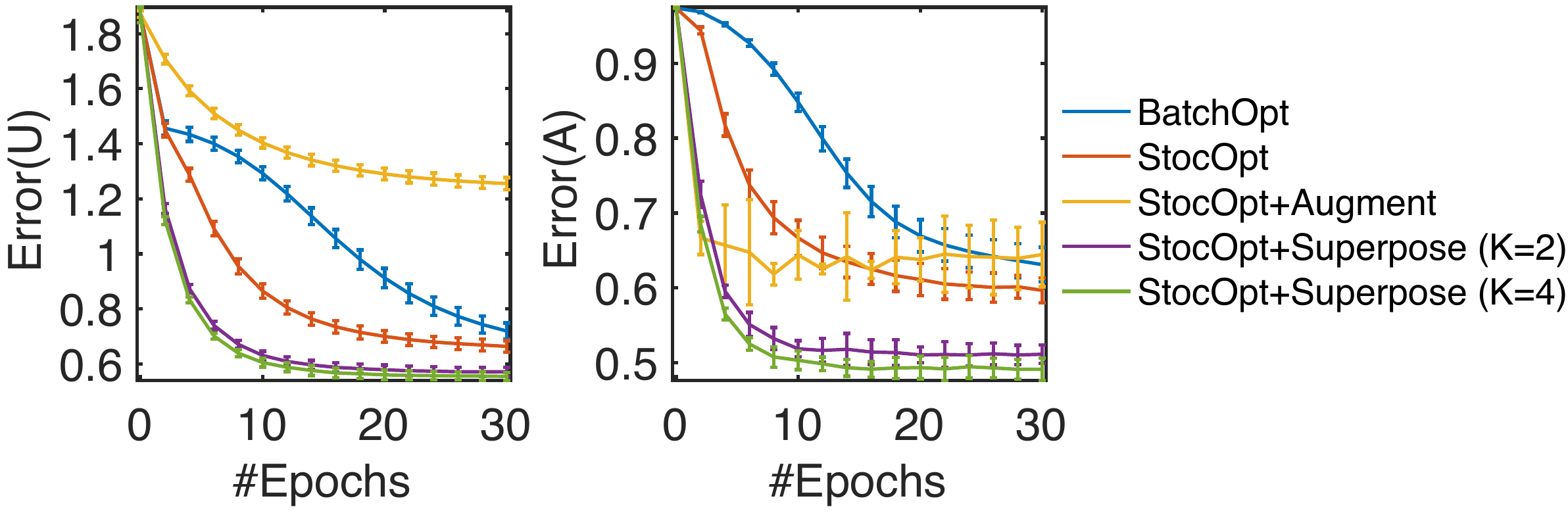}\label{fig:mle1}
}
\subfigure[]{
\includegraphics[height=0.17\linewidth]{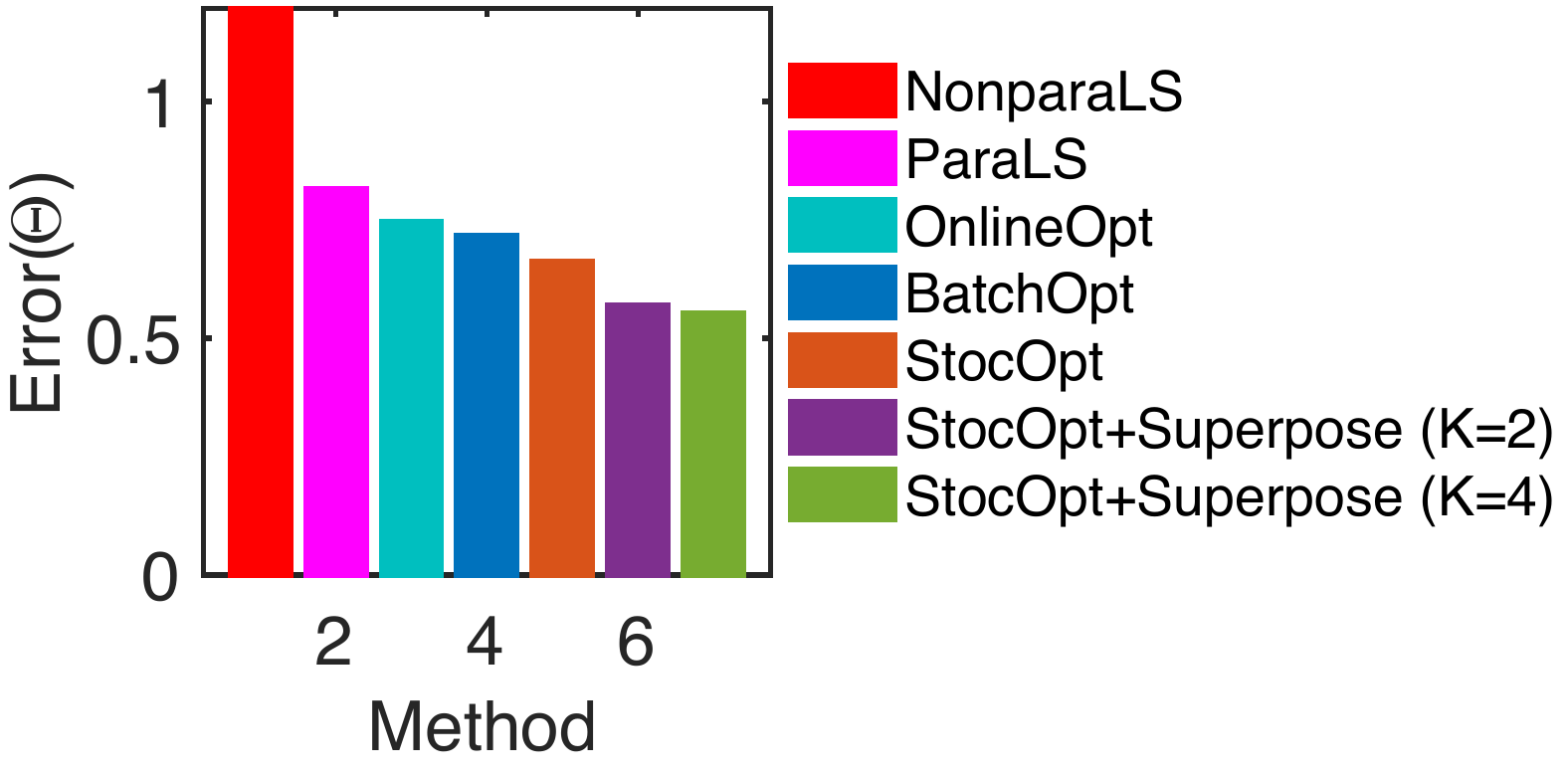}\label{fig:mle2}
}\vspace{-10pt}
\caption{Comparisons on estimation errors for various methods listed in the figure.}\label{fig:mle}
\end{figure*}

\section{Experiments}
\subsection{Validations based on synthetic data}
To verify the usefulness of our superposition-assisted stochastic optimization method, we first test it on a synthetic data set and compare it with alternative approaches. 
To generate the synthetic data, we define a multi-agent Hawkes process with $C=20$ entities and $M=100$ agents. 
The exogenous intensity matrix $\bm{U}\in\mathbb{R}^{C\times M}$ is a random matrix, whose values are uniformly sampled from the interval $[0, \frac{1}{C}]$. 
For the endogenous impact functions, we use a single exponential function $\exp(-wt)$ as the base, where $w=1$. 
Therefore, the endogenous impact tensor degrades to a matrix $\bm{A}\in\mathbb{R}^{C\times C}$, and we set $\bm{A}$ to be a random matrix with spectral norm $\|\bm{A}\|_2=0.7$. 
Using this model, for each agent we generate an sequence with at most $100$ events in the time window $[0, 50]$, using the branching process-based simulation method~\citep{moller2006approximate}.

To learn a multi-agent Hawkes process from the data, we consider the following four strategies:

\vspace{-4pt}
\textbf{1. BatchOpt:} Learn the model directly by traditional batch optimization~\citep{zhou2013learning}.

\vspace{-4pt}
\textbf{2. StocOpt:} Learn the model directly by Algorithm~\ref{alg1}.

\vspace{-4pt}
\textbf{3. StocOpt + Augment:} Use Algorithm~\ref{alg2} to generate $M'$ superposed sequences, add them to original data and then learn a $(M+M')$-agent Hawkes process by Algorithm~\ref{alg1}.

\vspace{-4pt}
\textbf{4. StocOpt + Superpose:} Learn the model by Algorithm~\ref{alg3}.

Besides the proposed method and its variants mentioned above, we further consider three additional methods as references against which to compare: the least squares method for nonparametric Hawkes process (\textbf{NonparaLS}) in~\citep{eichler2017graphical}, the least squares method for parametric Hawkes process (\textbf{ParaLS}) in~\citep{xu2017benefits}, and the online learning method (\textbf{OnlineOpt}) in~\citep{yang2017online}.

We test each method in $10$ trials.
In each trial the exogenous intensity matrix is initialized by $[\frac{N_c^m(T)}{T}]$ and the endogenous impact tensor is initialized randomly. 
For our method and its variant, the averaged relative estimation errors of the exogenous matrix and the endogenous impact tensor ($i.e.$, $\frac{\|\bm{U}_*-\widehat{\bm{U}}\|_F}{\|\bm{U}_*\|_F}$ and $\frac{\|\bm{A}_*-\widehat{\bm{A}}\|_F}{\|\bm{A}_*\|_F}$) and their standard deviations are recorded with respect to the increase of epochs, which are shonw in Fig.~\ref{fig:mle1}. 
For a fair comparison, the impact functions learned by ``NonparaLS'', ``ParaLS'' and ``OnlineOpt'' are further fitted with the ground truth of decay kernel, and then the tensor $\widehat{\bm{A}}$ are estimated accordingly. 
We just compare their averaged final estimation error ($\frac{\|\bm{\theta}_*-\hat{\bm{\theta}}\|_2}{\|\bm{\theta}_*\|_2}$) with ours in Fig.~\ref{fig:mle2} because they do not run epochs like our method.
In the experiment, the $K$-nonaugmented superposition for ``StocOpt + Augment'' is with $K=2$ and that for ``StocOpt + Superpose'' is with $K=2$ and $4$, respectively. 

The experimental results in Fig.~\ref{fig:mle1} verify our theoretical analysis.
We find that the our event-level stochastic optimization algorithm outperforms a traditional batch optimization algorithm on convergence. 
With the help of $K$-nonaugmented superposition, the proposed ``StocOpt + Superpose'' method achieves the best performance consistently with respect to both convergence and estimation errors. 
Fig.~\ref{fig:mle2} shows that our method also obtains superior learning results relative to other existing methods. 
In both Fig.~\ref{fig:mle1} and Fig.~\ref{fig:mle2}, with the increase of $K$, the estimation errors of the proposed method are further reduced, especially for the endogenous impact tensor.
This means that when we increase the number of sequences that are superposed together the dimension of parameters is reduced greatly while their bound just increases slightly, and thus the risk bound of our learning result is further tightened. 

Additionally, the ``StocOpt + Augment'' method does not work well when applied to multi-agent Hawkes process, which can be explained by our theory.
In particular, the ``StocOpt + Augment'' method augments the original data by $M'$ more superposed sequences. 
Such a data-augmentation strategy may be helpful for learning a single Hawkes process, as implied in~\citep{xu2017benefits}, but it does harm for learning the multi-agent Hawkes process because it introduces more agents (parameters).
As a result, both the upper bound and the dimension of parameters in the new model are larger than those of original one.
Accordingly, inequality (\ref{cond}) does not hold and the risk bound increases.

\subsection{Applications to cold-start recommendations}
Our multi-agent Hawkes process is well suited for analysis of online shopping behavior. 
The agents and the entities are respectively the users and the items, and each event sequence corresponds to a shopping history of a user. 
In our model, the relationships among items are described by the endogenous impact tensor $\bm{A}$ while users' preferences on those items are captured by the exogenous intensity matrix $\bm{U}$.
Ideally, by learning $\bm{A}$ and $\bm{U}$ we can recommend items for a user $m$ at time $t$ by estimating the maximum $\lambda_c^m(t)$. 
In the cold-start problem, however, the event sequences are extremely short, so we can't learn reliable preferences for users or triggering patterns for items. 

We imitate a cold-start situation by selecting training and testing data from the Amazon product data set (APD)~\citep{he2016ups}. 
Considering items in $24$ categories, we focus on the users having few purchases and recommend items for them. 
For each category, we find the items having more than $40$ rating behaviors and select their users satisfying the following three conditions: 1) the number of  their shopping behaviors from January 2014 to April 2014 is no more than five; 2) the scores they gave to these items are $4$ or $5$; and 3) they bought at least one item from April 2014 to July 2014. 
Based on the short shopping sequences from January to April, we aim to predict/recommend items for the users from April to July.

\begin{table*}[t]
\small
\centering
\caption{Summary of the Top-5 performance for various methods.}\label{tab1}
\vspace{3pt}
\setlength{\tabcolsep}{0.7pt}
\begin{tabular}
{c|
@{\hspace{2pt}}c@{\hspace{3pt}}c@{\hspace{3pt}}c@{\hspace{2pt}}|
@{\hspace{2pt}}c@{\hspace{3pt}}c@{\hspace{3pt}}c@{\hspace{2pt}}|
@{\hspace{2pt}}c@{\hspace{3pt}}c@{\hspace{3pt}}c@{\hspace{2pt}}|
@{\hspace{2pt}}c@{\hspace{3pt}}c@{\hspace{3pt}}c@{\hspace{2pt}}|
@{\hspace{2pt}}c@{\hspace{3pt}}c@{\hspace{3pt}}c@{\hspace{2pt}}|
@{\hspace{2pt}}c@{\hspace{3pt}}c@{\hspace{3pt}}c@{\hspace{1pt}}} 
\hline\hline
Method 
&\multicolumn{3}{c|@{\hspace{2pt}}}{SLIM}
&\multicolumn{3}{c|@{\hspace{2pt}}}{BPR}
&\multicolumn{3}{c|@{\hspace{2pt}}}{FPMC} 
&\multicolumn{3}{c|@{\hspace{2pt}}}{Single HP}
&\multicolumn{3}{c|@{\hspace{2pt}}}{MHPs} 
&\multicolumn{3}{c}{MHPs+Superpose}\\ \hline
Metric 
&$P@5$ &$R@5$ &$F_1@5$
&$P@5$ &$R@5$ &$F_1@5$ 
&$P@5$ &$R@5$ &$F_1@5$
&$P@5$ &$R@5$ &$F_1@5$ 
&$P@5$ &$R@5$ &$F_1@5$
&$P@5$ &$R@5$ &$F_1@5$\\ \hline
Instant Video 
&\textbf{2.37}  &\textbf{11.84} &\textbf{3.95}
&1.39  &5.79  &2.19  
&1.05  &5.23  &1.74  
&1.83  &9.14  &3.05  
&2.15  &10.74  &3.58  
&2.31  &11.57  &3.79\\ 
Android App 
&0.96  &4.81  &1.60 
&\textbf{1.48}  &\textbf{6.11}  &\textbf{2.32}  
&1.13  &5.63  &1.88  
&0.68  &3.41  &1.14  
&0.92  &4.60  &1.53  
&1.11  &5.56  &1.85\\ 
Automotive 
&0.70  &3.53  &1.17 
&\textbf{0.93}  &\textbf{4.08}  &\textbf{1.49}  
&0.69  &3.46  &1.15  
&0.31  &1.55  &0.52  
&0.31  &1.55  &0.52  
&0.71  &3.53  &1.18\\ 
Baby  
&0.26  &1.24  &0.42
&0.31  &1.53  &0.51  
&\textbf{0.48}  &\textbf{2.42}  &\textbf{0.81}  
&0.33  &1.67  &0.56  
&0.25  &1.25  &0.42  
&0.32  &1.59  &0.53\\ 
Beauty 
&1.25  &6.24  &2.08
&1.13  &1.43  &1.02  
&0.55  &2.75  &0.92  
&1.26  &6.29  &2.10  
&1.27  &6.33  &2.11  
&\textbf{1.28}  &\textbf{6.37}  &\textbf{2.12}\\ 
Book
&0.32  &1.61  &0.54
&0.33  &1.68  &0.59
&0.30  &1.52  &0.50
&0.47  &2.35  &0.77
&0.46  &2.32  &0.78
&\textbf{0.49}  &\textbf{2.45}  &\textbf{0.82}\\
CDs, Vinyl  
&0.44  &2.23  &0.74
&0.39  &1.67  &0.62  
&0.52  &2.60  &0.87  
&0.47  &2.37  &0.79  
&0.39  &1.95  &0.65  
&\textbf{0.64}  &\textbf{3.20}  &\textbf{1.07}\\ 
Cell Phone 
&0.96  &4.81  &1.61 
&1.03  &3.78  &1.56  
&\textbf{1.04}  &\textbf{5.20}  &\textbf{1.73}  
&0.84  &4.22  &1.41  
&0.73  &3.67  &1.22  
&0.71  &3.55  &1.18\\ 
Clothes, Jewelry 
&0.12  &0.63  &0.21 
&\textbf{0.26}  &\textbf{1.01}  &\textbf{0.40}  
&0.22  &1.10  &0.37  
&0.09  &0.45  &0.15  
&0.08  &0.41  &0.14  
&0.10  &0.48  &0.16\\ 
Digital Music  
&0.98  &4.92  &1.64
&\textbf{1.97}  &\textbf{8.06}  &\textbf{3.09}  
&1.59  &7.95  &2.65  
&0.66  &3.28  &1.09  
&0.66  &3.28  &1.09  
&1.64  &8.20  &2.73\\
Electronics
&0.23  &1.17  &0.39
&0.24  &1.20  &0.42
&0.20  &1.02  &0.33
&0.27  &1.38  &0.46
&0.27  &1.37  &0.45
&\textbf{0.28} &\textbf{1.40} &\textbf{0.47}\\
Grocery, food
&0.93  &4.67  &1.56
&0.83  &2.32  &1.15  
&0.67  &3.34  &1.11  
&0.76  &3.81  &1.27  
&0.91  &4.54  &1.51  
&\textbf{0.96}  &\textbf{4.80}  &\textbf{1.60}\\ 
Health Care  
&0.80  &3.99  &1.33
&0.23  &0.61  &0.29  
&0.26  &1.28  &0.43  
&0.79  &3.94  &1.31  
&0.82  &4.08  &1.36  
&\textbf{0.85}  &\textbf{4.24}  &\textbf{1.41}\\ 
Home, Kitchen 
&0.43  &2.21  &0.73 
&0.17  &0.47  &0.22  
&0.12  &0.59  &0.20  
&0.36  &1.80  &0.60  
&0.43  &2.13  &0.71  
&\textbf{0.44}  &\textbf{2.18}  &\textbf{0.74}\\ 
Kindle Store  
&0.99  &4.93  &1.65
&0.26  &0.52  &0.30  
&0.14  &0.72  &0.24  
&1.00  &5.02  &1.67  
&1.00  &5.02  &1.67  
&\textbf{1.08}  &\textbf{5.38}  &\textbf{1.79}\\ 
Movie, TV  
&1.20  &6.04  &2.01
&0.92  &3.37  &1.36  
&0.73  &3.64  &1.21  
&1.20  &5.98  &1.99  
&1.21  &6.05  &2.02  
&\textbf{1.24}  &\textbf{6.20}  &\textbf{2.07}\\ 
Music Instrument
&0.49  &2.47  &0.82  
&\textbf{2.22}  &\textbf{10.29}  &\textbf{3.60}  
&1.69  &8.47  &2.82  
&0.25  &1.23  &0.41  
&0.25  &1.23  &0.41  
&1.73  &8.64  &2.88\\ 
Office Product  
&0.64  &3.20  &1.07
&0.80  &2.18  &1.07  
&0.42  &2.11  &0.70  
&0.74  &3.72  &1.24  
&0.73  &3.65  &1.22  
&\textbf{0.83}  &\textbf{4.17}  &\textbf{1.39}\\ 
Patio Lawn 
&0.56  &2.86  &0.95 
&0.51  &2.10  &0.80  
&0.34  &1.72  &0.57  
&0.57  &2.87  &0.96  
&0.45  &2.23  &0.74  
&\textbf{0.67}  &\textbf{3.34}  &\textbf{1.11}\\ 
Pet Supply  
&0.86  &4.29  &1.43
&0.78  &2.25  &1.08  
&0.75  &3.75  &1.25  
&0.79  &3.94  &1.31  
&0.83  &4.14  &1.38  
&\textbf{0.91}  &\textbf{4.54}  &\textbf{1.51}\\ 
Sports 
&0.35  &1.76  &0.59 
&0.48  &1.79  &0.72  
&0.42  &2.08  &0.69  
&0.41  &2.00  &0.67  
&0.42  &2.00  &0.68  
&\textbf{0.46}  &\textbf{2.32}  &\textbf{0.77}\\ 
Home Tool  
&0.18  &0.88  &0.29
&0.60  &2.30  &0.91  
&0.67  &3.36  &1.12  
&0.19  &0.95  &0.32  
&0.20  &1.02  &0.34  
&\textbf{0.69}  &\textbf{3.46}  &\textbf{1.15}\\ 
Toys, Game 
&1.16  &5.83  &1.93
&0.69  &2.98  &1.10  
&0.58  &2.89  &0.96  
&1.04  &5.19  &1.73  
&1.13  &5.65  &1.88  
&\textbf{1.17}  &\textbf{5.83}  &\textbf{1.94}\\ 
Video Game 
&0.90  &4.52  &1.51 
&1.05  &5.23  &1.74  
&0.85  &4.23  &1.41  
&0.59  &2.93  &0.98  
&0.67  &3.33  &1.11  
&\textbf{1.17}  &\textbf{4.28}  &\textbf{1.77}\\ \hline
Overall
&0.76  &3.78  &1.26
&0.79  &3.04  &1.19	
&0.64  &3.21  &1.07	
&0.66  &3.31  &1.11	
&0.69  &3.44  &1.14	
&\textbf{0.90}  &\textbf{4.45}  &\textbf{1.54}\\
\hline\hline
\end{tabular}\label{tab:result}
\end{table*}

We address the cold-start problem by enhancing the robustness of learning results with the help of our superposition-assisted stochastic optimization algorithm. 
To demonstrate the importance of superposition, we learn a multi-agent Hawkes process from the original data (\textbf{MHPs}) and from its $K$-nonaugmented superposition (proposed \textbf{MHPs + Superpose}), respectively. 
{Furthermore, we consider four different models and methods, including learning a single Hawkes process from the original data (\textbf{Single HP})~\citep{zhou2013learning}, the sparse linear method (\textbf{SLIM})~\citep{ning2011slim}, the Bayesian personalized ranking (\textbf{BPR})~\citep{rendle2009bpr} and the factorization of personalized Markov chains (\textbf{FPMC})~\citep{rendle2010factorizing}. 
``SLIM'' is one of the state-of-the-art item-to-item recommendation methods~\citep{christakopoulou2016local,zheng2014cslim}, which outperforms most existing collaborative filtering methods, while ``BPR'' and ``FPMC'' are classical sequential recommendation methods. 
They are representative and competitive baselines for our Hawkes process-based methods.}
For Hawkes process-based methods, we recommend the item with the highest endogenous intensity for each user:
\begin{eqnarray*}
\begin{aligned}
c_{next}=\arg\max_{c\in\mathcal{C}}\sideset{}{_{(t_i^m,c_i^m)\in\mathcal{H}_t^m}}\sum \phi_{cc_i}(t-t_i).
\end{aligned}
\end{eqnarray*}
Here we do not consider exogenous intensities because the events for each individual are insufficient, and thus the learned exogenous intensities are unreliable.

Table~\ref{tab1} summarizes the performance of the various methods. 
For each method, we generate a recommendation list $\bm{r}^m$ with the top-5 most possible items for each user. 
Given the real set of the items that user $m$ will buy, denoted as $\bm{t}^m$, we use the top-5 precision ($P@5$), recall ($R@5$) and $F_{1}$-score ($F_1@5$) for evaluation. 
Their definitions are in the Supplementary Material.
The proposed model and learning method ``MHPs + Superpose'' improves the performance of recommendation system for most categories and obtains the best overall performance. 
These results suggest the potential of our method to solve the cold-start problem of recommendation systems. 
In those failed cases, we can find that all Hawkes-process-based methods are inferior to ``BPR'' or ``FPMC''. 
We believe this implies that for those item categories the shopping behaviors may not match well with Hawkes processes and the failures are due to model misspecification. 
However, even in those cases, our ``MHPs + Superpose'' method still outperforms the remaining two Hawkes-process-based methods except the ``Cell Phone'' category, which demonstrates the superiority of or method as well.

Note that for different item categories, we choose the $K$-nonaugmented superposition with different $K$ to obtain optimal performance. 
Figure~\ref{fig:coldstart} labels the optimal $K$ for different categories, and we find generally that the data with a large number of items and users require few superposition operations (small $K$). 
A potential reason for this is that for large-scale data, the endogenous impact tensor should be very sparse. 
Superposing too many sequences together is likely to introduce many nonexistent triggering patterns among unrelated items, and cause model misspecification. 
More experimental results and detailed analysis are given in the Supplementary Material.

\begin{figure}[t]
\centering
\includegraphics[width=0.7\linewidth]{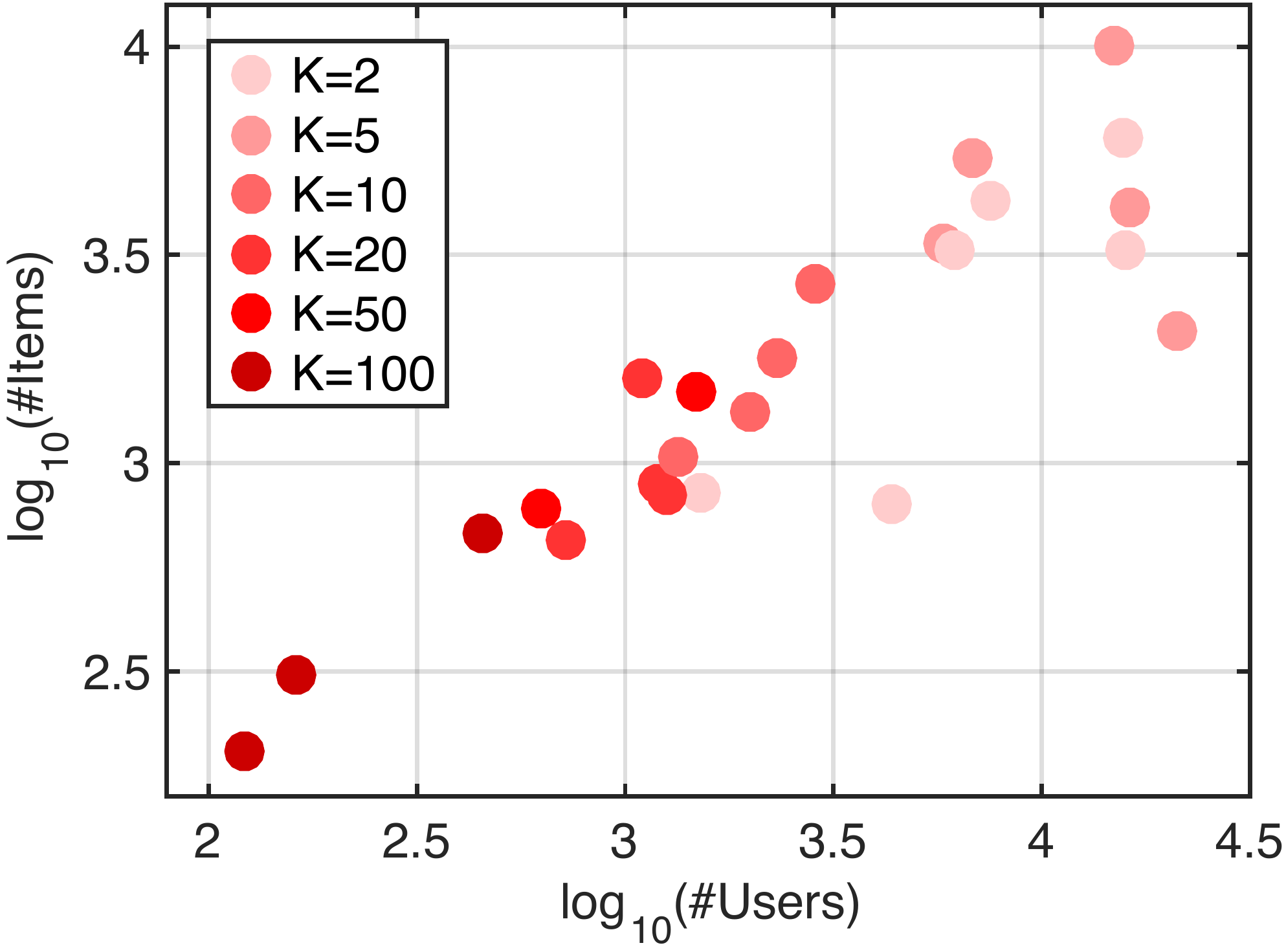}\vspace{-10pt}
\caption{{Each point in the plot corresponds to an item category in the data. 
Its coordinate indicates how many users ($x$-coordinate) and items ($y$-coordinate) the category has. 
For different categories, we apply the $K$-nonaugmented superposition with different $K$'s. 
For each category, the color of its point represents the $K$ corresponding to its best recommendation result shown in Table~\ref{tab1}.}
}\label{fig:coldstart}
\end{figure}

\section{Conclusions}
We have proposed a stochastic optimization algorithm for learning multi-agent Hawkes processes, and have explored the influence of the superposition operation on the learning results. 
We demonstrate that with the help of superposition we can learn the proposed model with lower excess risk. 
We verify our theoretical results on synthetic data, and on real-world data show the encouraging performance on solving the cold-start problem of recommendation systems.

\bibliography{example_paper}
\bibliographystyle{icml2018}

\newpage
\section{Supplementary Material}
\subsection{Proof of Theorem~\ref{thm1}}
\begin{proof}
The heart of the proof is the upper bound on the excess risk of stochastic convex optimization for empirical risk minimization (ERM)~\citep{zhang2017empirical}. 
In particular, for the $D$-dimensional parameters $\bm{\theta}$ learned by minimizing an empirical objective function $\min_{\bm{\theta}\in\Theta}\frac{1}{N}\sum_{n=1}^{N} f_n(\theta)$ from $N$ samples (an approximation of expected function $\mathbb{E}_{f\sim\mathbb{P}}[f(\theta)]$), we denote the optimum parameters and the corresponding object function as $\bm{\theta}_*$ and $F_*$, respectively. 
We assume that
\begin{enumerate}
\item The feasible domain $\Theta$ is a convex set and the parameters are bounded, $i.e.$, $\|\bm{\theta}\|_2^2\leq R$ for $\bm{\theta}\in\Theta$.
\item The random function $f$ is nonnegative, convex, and $\tau$-smooth over $\Theta$, $i.e.$, $\forall~\bm{\theta},\bm{\theta}'\in\Theta,~f\sim\mathbb{P}$, we have
\begin{eqnarray*}
\begin{aligned}
\|\nabla f(\bm{\theta}) - \nabla f(\bm{\theta}')\|_2\leq \tau\|\bm{\theta}-\bm{\theta}'\|_2.
\end{aligned}
\end{eqnarray*}
\item The expected function $F=\mathbb{E}[f]$ is $\gamma$-strongly convex, $i.e.$, $\forall~\bm{\theta},\bm{\theta}'\in\Theta$
\begin{eqnarray*}
\begin{aligned}
F(\bm{\theta})+\nabla F(\bm{\theta})^{\top}(\bm{\theta}'-\bm{\theta})+\frac{\gamma}{2}\|\bm{\theta}'-\bm{\theta}\|_2^2\leq F(\bm{\theta}').
\end{aligned}
\end{eqnarray*}
\end{enumerate}
Then, the Theorem 5 in~\citep{zhang2017empirical} proves that with probability at least $1-2\delta$, $0<\delta<0.5$, we have
\begin{eqnarray}\label{bound0}
\begin{aligned}
&\mathbb{E}[f(\hat{\bm{\theta}})-f(\bm{\theta}_*)]\\
&\leq \frac{4R\tau C_{\epsilon,\delta}}{N}+\frac{4 R\tau^2 C_{\epsilon,\delta}}{\gamma N}\\
&\quad+\frac{4\sqrt{R}\sup_{f\sim \mathbb{P}}\|\nabla f(\bm{\theta}_*)\|_2\log\frac{2}{\delta}}{N}\\
&\quad+\frac{8\tau F_*\log\frac{2}{\delta}}{\gamma N}
+2\sqrt{R}\tau\epsilon\left(2 + \sqrt{\frac{C_{\epsilon,\delta}}{N}} + \frac{C_{\epsilon,\delta}}{N}\right),
\end{aligned}
\end{eqnarray}
where $C_{\epsilon,\delta}=2(\log\frac{2}{\delta}+D\log\frac{6\sqrt{R}}{\epsilon})$ and $\epsilon>0$.

We proof that the learning problem of multi-agent Hawkes process $HP(\bm{U},\bm{A})$ satisfies the conditions mentioned above.
In particular, we have $N=I_{\Sigma}$, and the dimension of parameter $\bm{\theta}=[\mbox{vec}(\bm{U});\mbox{vec}(\bm{A})]$ is $D=C(M+CL)$ according to the definition in (\ref{mshp}). 
Because the parameter $\bm{\theta}$ is nonnegative, the feasible domain $\Theta$ is a convex set. 
We assume that $\|\bm{U}\|_F^2\leq U_0$ and $\|\bm{A}\|_F^2\leq A_0$, and thus, $\|\bm{\theta}\|_2^2=\|\bm{U}\|_F^2+\|\bm{A}\|_F^2$ is bounded and $R=U_0+A_0$. 
Therefore, the condition 1 is satisfied. 

The random function $f$ is the conditional probability of an event given its history, so it is nonnegative. 
It can be rewritten as
\begin{eqnarray}
\begin{aligned}
f=\bm{X}^{\top}\bm{\theta} - \log(\bm{x}^{\top}\bm{\theta}),
\end{aligned}
\end{eqnarray}
where both $\bm{X}$ and $\bm{x}$ are $D$-dimensional nonnegative vectors, so it is convex as well. 
Furthermore, the $\bm{x}^{\top}\bm{\theta}$ corresponding to intensity function, which has a lower bound $\lambda_0$. 
Then, we have
\begin{eqnarray}\label{tau}
\begin{aligned}
\|\nabla f(\bm{\theta}) - \nabla f(\bm{\theta}')\|_2
&=\Biggl\|\frac{\bm{x}}{\bm{x}^{\top}\bm{\theta}'}-\frac{\bm{x}}{\bm{x}^{\top}\bm{\theta}}\Biggr\|_2\\
&=\Biggl\|\frac{\bm{x}\bm{x}^{\top}(\bm{\theta}-\bm{\theta}')}{(\bm{x}^{\top}\bm{\theta}')(\bm{x}^{\top}\bm{\theta})}\Biggr\|_2\\
&\leq \frac{1}{\lambda_0^2}\|\bm{x}^{\top}\bm{x}\|_2\|\bm{\theta}-\bm{\theta}'\|_2.
\end{aligned}
\end{eqnarray}
Because each $\bm{x}$ involves at most $IL$ decay kernels, whose energies have an upper bound $G$, we have $\|\bm{x}^{\top}\bm{x}\|_2\leq (ILG)^2$. 
Plugging the value into (\ref{tau}), we can find that $f$ is a $\tau$-smooth function with $\tau=(\frac{ILG}{\lambda_0})^2$. 
In other words, the function $f$ satisfies the condition 2. 

Moreover, because $f$ is a convex and $\tau$-smooth function, it is a $\tau$-strongly convex function~\citep{bertsekas2015convex}. 
Because the expected function $F=\mathbb{E}[f]$, it is also a $\tau$-strongly convex function. 
It means that the function $F$ satisfies the condition 3 with $\gamma=\tau=(\frac{ILG}{\lambda_0})^2$. 

As a result, plugging $I_{\Sigma}$, $\tau$ and $\gamma$ into (\ref{bound0}), we obtain the risk bound corresponding to the stochastic optimization of multi-agent Hawkes process:
\begin{eqnarray}\label{bound_final}
\begin{aligned}
&\mathbb{E}[f(\hat{\bm{\theta}})-f(\bm{\theta}_*)]\\
&\leq \frac{8R\tau C_{\epsilon,\delta}}{I_\Sigma}+\frac{4\sqrt{R}\sup_{f\sim \mathbb{P}}\|\nabla f(\bm{\theta}_*)\|_2\log\frac{2}{\delta}}{I_\Sigma}\\
&\quad+\frac{8 F_*\log\frac{2}{\delta}}{I_\Sigma}
+2\sqrt{R}\tau\epsilon\left(2 + \sqrt{\frac{C_{\epsilon,\delta}}{I_\Sigma}} + \frac{C_{\epsilon,\delta}}{I_\Sigma}\right).
\end{aligned}
\end{eqnarray}
Similar to \citep{zhang2017empirical}, we assume that both $F_*$ and $\sup_{f\sim \mathbb{P}}\|\nabla f(\bm{\theta}_*)\|_2$ are ignorable compared to $\tau$, and we set $\epsilon=\frac{1}{I_\Sigma}$. 
Furthermore, we assume that $\log\frac{2}{\delta}$ is comparable to $D\log I_\Sigma$. 
Then, except the first term in the right-hand side of (\ref{bound_final}), the remaining terms are ignorable. 
Plugging $C_{\epsilon,\delta}$ into (\ref{bound_final}), with high probability we have
\begin{eqnarray}
\begin{aligned}
\mathbb{E}[f(\hat{\bm{\theta}})-f(\bm{\theta}_*)]=\mathcal{O}\left( \frac{R\tau (D\log I_\Sigma+\log\frac{2}{\delta})}{I_\Sigma} \right).
\end{aligned}
\end{eqnarray}
This completes the proof.
\end{proof}

\begin{table*}[t]
\small
\centering
\caption{Summary of the Top-10 performance for various methods.}\label{tab2}
\vspace{3pt}
\setlength{\tabcolsep}{0.7pt}
\begin{tabular}
{c|
@{\hspace{2pt}}c@{\hspace{3pt}}c@{\hspace{3pt}}c@{\hspace{2pt}}|
@{\hspace{2pt}}c@{\hspace{3pt}}c@{\hspace{3pt}}c@{\hspace{2pt}}|
@{\hspace{2pt}}c@{\hspace{3pt}}c@{\hspace{3pt}}c@{\hspace{2pt}}|
@{\hspace{2pt}}c@{\hspace{3pt}}c@{\hspace{3pt}}c@{\hspace{2pt}}|
@{\hspace{2pt}}c@{\hspace{3pt}}c@{\hspace{3pt}}c@{\hspace{2pt}}|
@{\hspace{2pt}}c@{\hspace{3pt}}c@{\hspace{3pt}}c@{\hspace{1pt}}} 
\hline\hline
Method 
&\multicolumn{3}{c|@{\hspace{2pt}}}{SLIM}
&\multicolumn{3}{c|@{\hspace{2pt}}}{BPR}
&\multicolumn{3}{c|@{\hspace{2pt}}}{FPMC} 
&\multicolumn{3}{c|@{\hspace{2pt}}}{Single HP}
&\multicolumn{3}{c|@{\hspace{2pt}}}{MHPs} 
&\multicolumn{3}{c}{MHPs+Superpose}\\ \hline
Metric 
&$P$ &$R$ &$F_1$
&$P$ &$R$ &$F_1$ 
&$P$ &$R$ &$F_1$
&$P$ &$R$ &$F_1$ 
&$P$ &$R$ &$F_1$
&$P$ &$R$ &$F_1$\\ \hline
Instant Video  
&\textbf{1.47}  &\textbf{14.66} &\textbf{2.67}
&1.09  &8.90  &1.90  
&0.92  &9.22  &1.68  
&1.24  &12.39  &2.25  
&1.40  &14.00  &2.55  
&1.46  &14.64  &2.66\\ 
Android App 
&0.57  &5.69  &1.03 
&\textbf{1.15}  &\textbf{9.34}  &\textbf{2.01}  
&0.89  &8.87  &1.61  
&0.54  &5.44  &0.99  
&0.70  &7.03  &1.28  
&0.94  &9.42  &1.71\\ 
Automotive 
&0.38  &3.75  &0.68 
&\textbf{0.60}  &\textbf{5.48}  &\textbf{1.07}  
&0.48  &4.81  &0.87  
&0.22  &2.21  &0.40  
&0.22  &2.21  &0.40  
&0.44  &4.42  &0.80\\ 
Baby
&0.25  &2.51  &0.46  
&\textbf{0.38}  &\textbf{3.79}  &\textbf{0.69}  
&0.31  &3.07  &0.56  
&0.38  &3.76  &0.68  
&0.34  &3.43  &0.62  
&0.31  &3.09  &0.56\\ 
Beauty  
&0.75  &7.54  &1.37
&0.93  &9.28  &1.69
&0.40  &4.02  &0.73  
&0.87  &8.71  &1.58  
&0.89  &8.89  &1.62  
&\textbf{1.10}  &\textbf{3.10}  &\textbf{1.37} \\
Book
&0.22  &2.17  &0.39
&0.24  &2.23  &0.43
&0.20  &2.02  &0.36
&0.33  &3.25  &0.59
&0.32  &3.23  &0.58
&\textbf{0.34}  &\textbf{3.25}  &\textbf{0.60}\\ 
CDs, Vinyl 
&0.36  &3.62  &0.66 
&0.37  &3.74  &0.68  
&0.32  &3.20  &0.58  
&0.29  &2.92  &0.53  
&0.49  &4.87  &0.89
&\textbf{0.53}  &\textbf{3.91}  &\textbf{0.91} \\ 
Cell Phone  
&0.44  &4.42  &0.80
&0.80  &5.89  &1.37  
&\textbf{0.80}  &\textbf{7.96}  &\textbf{1.45}  
&0.50  &4.99  &0.91  
&0.52  &5.21  &0.95  
&0.52  &5.23  &0.95\\ 
Clothes, Jewelry  
&0.05  &0.48  &0.09
&\textbf{0.27}  &\textbf{2.16}  &\textbf{0.46}  
&0.18  &1.83  &0.33  
&0.08  &0.76  &0.14  
&0.07  &0.72  &0.13  
&0.07  &0.66  &0.12\\ 
Digital Music 
&0.82  &8.20  &1.49 
&\textbf{2.21}  &\textbf{19.33}  &\textbf{3.92}  
&1.06  &10.6  &1.93  
&0.41  &4.10  &0.75  
&0.41  &4.10  &0.75  
&1.07  &10.66  &1.94\\
Electronics
&0.15  &1.59  &0.29
&0.20  &1.81  &0.35
&0.14  &1.60  &0.29
&0.20  &1.99  &0.36
&0.22  &2.20  &0.40
&\textbf{0.23}  &\textbf{2.23}  &\textbf{0.42}\\ 
Grocery, Food  
&0.57  &5.72  &1.04
&0.70  &4.56  &1.16  
&0.45  &4.50  &0.82  
&0.62  &6.25  &1.14  
&0.62  &6.18  &1.12  
&\textbf{0.70}  &\textbf{7.03}  &\textbf{1.28}\\ 
Health Care 
&0.52  &5.24  &0.95
&\textbf{0.92}  &\textbf{4.93}  &\textbf{1.45}  
&0.22  &2.22  &0.40  
&0.67  &6.73  &1.22  
&0.67  &6.70  &1.22  
&0.67  &6.70  &1.22\\ 
Home, Kitchen 
&0.29  &2.90  &0.53 
&0.21  &1.14  &0.32  
&0.14  &1.38  &0.25  
&0.29  &2.87  &0.52  
&0.32  &3.18  &0.58  
&\textbf{0.32}  &\textbf{3.21}  &\textbf{0.60}\\ 
Kindle Store
&0.64  &6.44  &1.17  
&0.19  &0.69  &0.26  
&0.10  &1.04  &0.19  
&0.65  &6.50  &1.18  
&0.67  &6.73  &1.22  
&\textbf{0.69}  &\textbf{6.92}  &\textbf{1.26}\\ 
Movies, TV
&0.78  &7.80  &1.42  
&0.76  &5.57  &1.28  
&0.57  &5.65  &1.03  
&0.74  &7.37  &1.34
&0.80  &7.97  &1.45  
&\textbf{0.82}  &\textbf{8.17}  &\textbf{1.49}\\ 
Music Instrument 
&0.62  &6.17  &1.12 
&\textbf{1.73}  &\textbf{14.92}  &\textbf{3.05}  
&1.53  &15.34  &2.79  
&0.25  &2.47  &0.45  
&0.25  &2.47  &0.45  
&1.48  &14.81  &2.69\\ 
Office Product 
&0.46  &4.61  &0.84 
&\textbf{0.82}  &\textbf{3.32}  &\textbf{1.21}  
&0.31  &3.07  &0.56  
&0.62  &6.18  &1.12  
&0.60  &6.03  &1.10  
&0.61  &6.10  &1.11\\ 
Patio Lawn 
&0.22  &2.23  &0.41 
&0.41  &2.87  &0.70  
&0.31  &3.08  &0.56  
&0.46  &4.62  &0.84  
&0.51  &5.10  &0.93  
&\textbf{0.53}  &\textbf{5.25}  &\textbf{0.96}\\ 
Pet Supply 
&0.59  &5.89  &1.07 
&0.63  &4.11  &1.06  
&0.53  &5.26  &0.96  
&0.62  &6.18  &1.12  
&0.59  &5.94  &1.08  
&\textbf{0.64}  &\textbf{6.43}  &\textbf{1.17}\\ 
Sports 
&0.19  &1.90  &0.34   
&0.28  &2.85  &0.52  
&0.30  &2.98  &0.54  
&0.29  &2.91  &0.53  
&0.32  &3.20  &0.58
&\textbf{0.37}  &\textbf{2.56}  &\textbf{0.62}\\ 
Home Tool 
&0.15  &1.49  &0.27 
&0.48  &3.88  &0.84  
&0.44  &4.40  &0.80  
&0.16  &1.56  &0.28  
&0.16  &1.63  &0.30  
&\textbf{0.48}  &\textbf{4.82}  &\textbf{0.88}\\ 
Toys, Game 
&0.51  &5.10  &0.93 
&0.46  &3.76  &0.80  
&0.32  &3.17  &0.58  
&0.62  &6.19  &1.13  
&0.61  &6.10  &1.11  
&\textbf{0.72}  &\textbf{7.19}  &\textbf{1.31}\\ 
Video Game 
&0.53  &5.31  &0.97 
&0.76  &7.61  &1.38  
&0.73  &7.26  &1.32  
&0.53  &5.31  &0.97  
&0.62  &6.19  &1.12  
&\textbf{0.96}  &\textbf{6.88}  &\textbf{1.63}\\ \hline
Overall
&0.48 &4.81 &0.87
&0.71 &5.25 &1.19 
&0.49 &4.87 &0.89 
&0.48 &4.84 &0.88
&0.51 &5.05 &0.92 
&\textbf{0.74} &\textbf{6.43} &\textbf{1.21}\\
\hline\hline
\end{tabular}\label{tab:result}
\end{table*}

\subsection{Proof of Theorem~\ref{thm2}}
\begin{proof}
Because $N(t)=\sum_{m=1}^{M}N^m(t)$, for each entity $c\in\mathcal{C}$, its intensity is
\begin{eqnarray*}
\begin{aligned}
\lambda_c(t)=&\frac{\mathbb{E}[dN_c(t)|\mathcal{H}_t]}{dt}
=\sideset{}{_{m=1}^{M}}\sum\frac{\mathbb{E}[dN_c^m(t)|\cup_{l=1}^{M}\mathcal{H}_t^l]}{dt}\\
=&\sideset{}{_{m=1}^{M}}\sum\frac{\mathbb{E}[dN_c^m(t)|\mathcal{H}_t^m]}{dt}
=\sideset{}{_{m=1}^{M}}\sum\lambda_c^m(t).
\end{aligned}
\end{eqnarray*}
Here $\mathcal{H}_t=\cup_{m=1}^{M}\mathcal{H}_t^{m}$ contains all historical events in the superposed process. 
Because the Hawkes processes have shared impact functions, we have
\begin{eqnarray*}
\begin{aligned}
\lambda_c(t) =& \sideset{}{_{m=1}^{M}}\sum\Bigl(\mu_c^m(t) +\sideset{}{_{(t_i^m, c_i^m)\in\mathcal{H}_t^{m}}}\sum\phi_{cc_i^m}(t-t_i^m)\Bigr)\\
=&\sideset{}{_{m=1}^{M}}\sum\mu_c^m(t) + \sideset{}{_{(t_i, c_i)\in\mathcal{H}_t}}\sum\phi_{cc_i}(t-t_i),
\end{aligned}
\end{eqnarray*}
for $c\in\mathcal{C}$. 
According to the definition in (\ref{mshp}), we have $N(t)\sim HP(\sum_{m=1}^{M}\bm{\mu}^m,\bm{A})$.
\end{proof}

\subsection{More Experimental Results}
For each method in our paper, we generate a recommendation list $\bm{r}^m$ with the top-$N$ most possible items for each user. 
Suppose the real set of the items that user $m$ will buy is $\bm{t}^m$. 
The top-$N$ precision ($P@N$), recall ($R@N$) and $F_{1}$-score ($F_1@N$) are used for evaluation.
Their definitions are.
\begin{eqnarray*}
\begin{aligned}
P@N  &= \frac{1}{M}\sum_{m} P_m@N =\frac{1}{M}\sum_{m} \frac{|\bm{r}^m \cap \bm{t}^m|}{|\bm{r}^m|}\times 100\%\\
R@N &= \frac{1}{M}\sum_{m} R_m@N =\frac{1}{M}\sum_{m} \frac{|\bm{r}^m \cap \bm{t}^m|}{|\bm{t}^m|}\times 100\%\\
F_{1}@N &= \frac{1}{M}\sum_{m} F_{1m}@N  =\frac{1}{M}\sum_{m} \frac{2\cdot P_m@N \cdot R_m@N}{P_m@N+R_m@N}
\end{aligned}
\end{eqnarray*}

Besides the Top-5 performance of various methods, their Top-10 ($N=10$) performance is listed in Table~\ref{tab2}. 
We can find that our ``MHPs + Superpose'' method is still superior to other methods in most situations.

\subsection{Relation to Item-To-Item Recommendation}
If we specialize the impact function $\phi_{cc_i}(t,t_i)$ as some similarity measurement (\emph{e.g.}, cosine), our models are reduced to item-to-item (I2I) recommendation methods.
However, different from existing I2I models, such as~\citep{christakopoulou2016local,zheng2014cslim,ning2011slim}, we also include user personalized preference in the modeling process by $\mu_c^m$, aiming to model the intuition of ``different people may have different preferences even conditioned on the same product'' 
Further more, we considered the time influence in the item-to-item transition process, which is more reasonable for real-world scenarios, for example, we can use exponential decay function to model the changes of relevance over time, and also we can design more complex impact function to capture more detail item-to-item transition patterns.
Overall, our models are very flexible, and can degenerate into many other popular recommendation models, which also inspires us to design other specific algorithms for capturing user behavior patterns in different real-world scenarios.



\end{document}